\documentclass[accepted]{uai2026} 
                        
\usepackage{xcolor}

\usepackage[american]{babel}

\usepackage{natbib} 
\usepackage{mathtools} 
\usepackage{booktabs} 
\usepackage{tikz} 
\usepackage{tikz} 
\usepackage{pgfplots} 
\pgfplotsset{compat=1.18}

\newcommand{\indep}{\mathrel{\perp\!\!\!\perp}}
\usepackage{makecell}
\usepackage{amsmath,amssymb,amsthm}
\usepackage{graphicx}
\usepackage[mathscr]{euscript}

\DeclareMathOperator*{\argmax}{arg\,max}
\usepackage{bm}

\usepackage{thmtools}
\usepackage{algorithm}
\usepackage{algorithmic}

\usepackage{comment}

\usepackage{amsmath,amssymb,amsthm}

\newtheorem{lemma}{Lemma}

\newtheorem{assumption}{Assumption}
\newtheorem{definition}{Definition}

\usepackage{xcolor}

\title{Fixed-Confidence Best-Arm Identification for Causal Mediation Analysis}

\author[1]{\href{mailto:<harsh.vardhan.shri@gmail.com>?Subject=Your UAI 2026 paper}{Harsh Shrivastava}{}}
\author[1]{Yuta Kawakami}
\author[1,2]{Junpei Komiyama}
\author[1]{Jin Tian}
\affil[1]{%
    Mohamed bin Zayed University of Artificial Intelligence (MBZUAI)\\
    Abu Dhabi, UAE
}
\affil[2]{%
    RIKEN AIP,
    Tokyo, Japan
}
  
\begin{document}
\maketitle

\begin{abstract}
This paper studies the problem of identifying the treatment that maximizes the expected natural direct potential outcome (NDPO), which captures the potential outcome of an intervention while excluding the pathway transmitted through a mediator that researchers may wish to remove from evaluation. We first establish population-level identification of the expected NDPO in a causal bandit setting using observable interventional distributions. We then develop a fixed-confidence best-arm identification (BAI) algorithm based on the Track-and-Stop (TaS) framework, employing a cutting-set method to solve the resulting semi-infinite optimization problem. The proposed algorithm achieves sample-efficient identification with a high-probability correctness guarantee. We prove that it satisfies $\delta$-correctness and asymptotic optimality. Finally, we validate the approach through empirical evaluations on a large-scale real-world advertising dataset (IPinYou).

\end{abstract}



\section{Introduction}\label{sec:intro}


Causal mediation analysis studies how a treatment ($X$) influences an outcome ($Y$) through different pathways
\citep{Sobel1982,Baron1986,Robins1992IdentifiabilityAE,Avin2005,Pearl09,Imai2010}.
To elucidate causal mechanisms in nonlinear models,
\citet{Pearl2001} introduced \emph{path-specific effects}, including 
the expected \emph{natural direct effect} (NDE) 
via a mediator ($Z$), defined as $\mathbb{E}[Y_{x,Z_{x_0}}] - \mathbb{E}[Y_{x_0}]$, where $Y_{x, Z_{x_0}}$ denotes the potential outcome under treatment level $x$ with the mediator set to the value it would attain under a reference level $x_0$, and $Y_{x_0}$ denotes the potential outcome under treatment level $x_0$.
The expected NDE measures the causal effect of a treatment on an outcome that is not mediated through the mediator.

In causal bandit problems, the conventional objective is to maximize the interventional mean under arm (treatment) $X=x$, i.e., $\mathbb{E}[Y_x] = \mathbb{E}[Y | do(X = x)]$ \citep{Lattimore2016, Lee2018, Lee2019}. 
However, the interventional mean aggregates effects along all causal pathways, including mechanisms that the researcher may wish to exclude from evaluation (e.g., unethical factor or positioning bias in advertisements). 
To isolate the influence via the direct pathway, this paper studies the problem of identifying the arm that maximizes the expected \emph{natural direct potential outcome} (NDPO), i.e., $\mathbb{E}[Y_{x, Z_{x_0}}]$, in a causal bandit setting. 
The NDPO corresponds to the first term of the NDE and is defined via a nested counterfactual.

In a causal bandit setting, actions correspond to interventions on $X$ \citep{Lattimore2016, Lee2018}, and one observes i.i.d. samples from the interventional distribution $\mathbb{P}(Z,Y | do(X = x))$.
The identification results of \citet{Pearl2001} are not directly formulated for this setting. 
Instead, \citet{Pearl2001} established the identification using $\mathbb{P}(Y_{x,z})$ and $\mathbb{P}(Z_{x_0})$.
Therefore, we first establish the corresponding identification assumptions of the expected NDPO at the population level using the interventional distribution $\mathbb{P}(Z,Y|do(X = x))$.

Moreover, in the bandit setting, a further objective is to identify the target arm using as few samples as possible while ensuring correctness with high probability.
Such a problem
has been extensively studied in the fixed-confidence best-arm identification (BAI) framework for multi-armed bandits \citep{mannor&shie, glynnjuneja2004, audibert2010best, Jamieson&Novak2014, garivier2016optimalbestarmidentification}.

We subsequently investigate the fixed-confidence BAI problem for identifying the expected NDPO-optimal arm, which maximizes the expected NDPO, leveraging the established population-level identification results.
To develop a BAI algorithm for identifying the expected NDPO-optimal arm, we first derive a lower bound on the sample complexity required to identify the arm that maximizes the expected NDPO. 
Compared with the standard BAI framework \citep{garivier2016optimalbestarmidentification}, the BAI for the expected NDPO-optimal arm is more challenging because evaluating the expected NDPO for a given arm $x$ requires combining information from the outcome distribution under $x$ conditional on $Z = z$, $\mathbb{P}(Z,Y | do(X = x))$, with the mediator distribution under the different baseline arm $x_0$, $\mathbb{P}(Z | do(X = x_0))$.

We then develop a new BAI algorithm (TaS-NDPO) for identifying the expected NDPO-optimal arm based on the Track-and-Stop (TaS) framework \citep{garivier2016optimalbestarmidentification}, using a cutting-set method \citep{mutapcic2009cutting} to solve the resulting semi-infinite optimization problem.
We prove that it satisfies the desired properties of $\delta$-correctness and asymptotic optimality. 
$\delta$-correctness guarantees that the algorithm identifies the optimal-arm with probability at least $1-\delta$ and asymptotic optimality refers to achieving the lower bound of sample complexity in the limit as $\delta \to 0$.

Finally, we conduct empirical evaluations on a large-scale real-world advertising dataset (IPinYou) to demonstrate the effectiveness of the proposed method.




\section{Notation and Background}\label{sec:problem_setup}

In this section, we introduce notation and background material.
Uppercase letters (e.g., $X,Z,Y$) denote random variables, and lowercase letters  (e.g., $x,z,y$) denote their realizations.
Data-dependent quantities such as the stopping time $\tau_\delta$ and the recommendation $\hat x(\tau_\delta)$ are random variables.
Let $\mathbf{1}\{\cdot\}$ denote the indicator function.
For two probability distributions $P$ and $Q$ over $\mathcal{A}$, the KL divergence is $\mathrm{KL}(P \Vert Q):=\sum_{a \in \mathcal{A}} P(a) \log \frac{P(a)}{Q(a)}$.
An event happens almost surely (a.s.) if it occurs with probability 1.

{
{\bf Structural Causal Models.}
We use the language of Structural Causal Models (SCM) as our basic semantic and inferential framework \citep{Pearl09}.
An SCM $\mathcal{M}$ is a tuple $\left<{\boldsymbol V},{\boldsymbol U}, \mathcal{F}, \mathbb{P}_{\boldsymbol U} \right>$, where ${\boldsymbol U}$ is a set of exogenous (unobserved) variables following a joint distribution $\mathbb{P}_{\boldsymbol U}$, and ${\boldsymbol V}$ is a set of endogenous (observable) variables whose values are determined by structural functions $\mathcal{F}=\{f_{V_i}\}_{V_i \in {\boldsymbol V}}$ such that $v_i:= f_{V_i}({\mathbf{pa}}_{V_i},{\boldsymbol u}_{V_i})$ where ${\mathbf{PA}}_{V_i} \subseteq {\boldsymbol V}$ and $U_{V_i} \subseteq {\boldsymbol U}$. 
Each SCM $\mathcal{M}$ induces an observational distribution $\mathbb{P}_{\boldsymbol V}$ over ${\boldsymbol V}$, and a causal graph $G(\mathcal{M})$ over ${\boldsymbol V}$ in which there exists a directed edge from every variable in ${\mathbf{PA}}_{V_i}$ to $V_i$.
An intervention of setting a set of endogenous variables ${\boldsymbol X}$ to constants ${\boldsymbol x}$, denoted by $do({\boldsymbol x})$, replaces the original equations of ${\boldsymbol X}$ by the constants ${\boldsymbol x}$ and induces a \textit{sub-model}  $\mathcal{M}_{{\boldsymbol x}}$.
We denote the potential outcome $Y$ under intervention $do({\boldsymbol x})$ by $Y_{{\boldsymbol x}}({\boldsymbol u})$, which is the solution of $Y$ in the sub-model $\mathcal{M}_{{\boldsymbol x}}$ given ${\boldsymbol U}={\boldsymbol u}$. 
}

{
{\bf Causal Mediation Analysis.}
Let $X$ be a 
treatment variable (arm), $Y$ be an 
outcome, and $Z$ be a
mediator variable. 
The following nonparametric SCM ($\mathcal{M}$) is used in causal mediation analysis.
\begin{equation}
\label{scm}
\begin{gathered}
  \mathcal{M}:X:=f_X(U_X),
    Z:=f_{Z}(X,U_{Z}),\\
    Y:=f_Y(X,Z,U_Y),
   \end{gathered}
\end{equation}
where $U_X$, $U_{Z}$, and $U_Y$ are latent exogenous variables,
with bidirected edges indicating unmeasured confounders affecting the variables.
Figure \ref{fig:motivating_causal_graph} shows a causal graph representing $\mathcal{M}$.
}
Under SCM $\mathcal{M}$,
\citet{Pearl2001} defined the unit-level 
NDE
given a reference value $x_0$ as $Y_{x,Z_{x_0}}(\bm{u})-Y_{x_0}(\bm{u})$ for each subject $\bm{u}$, and the expected NDE given a reference value $x_0$ as
$\mathbb{E}[Y_{x,Z_{x_0}}] - \mathbb{E}[Y_{x_0}]$.
\citep{Pearl2001} provided identification conditions of the expected NDE, that is $Y_{x,z}\indep Z_{x_0}$ for any $x$ and $z$.


\begin{figure}[tb]
    \centering
\scalebox{1}{
\begin{tikzpicture}
\node (z) at (0,1.25) {$Z$};
\node (y) at (1.5,0) {$Y$};
\node (x) at (-1.5,0) {$X$};

\path (x) edge[->] (y);
    
\path (z) edge[->] (y);
\path (x) edge[->] (z);



\path (x) edge[<->,dotted,bend left] (z);
\path (z) edge[<->,dotted,bend left] (y);
\path (x) edge[<->,dotted,bend right] (y);
    

\end{tikzpicture}
} 
    \caption{A causal graph representing $\mathcal{M}$.}
    \label{fig:motivating_causal_graph}
\end{figure}


Throughout this paper, we consider SCM $\mathcal{M}$ with a discrete arm (treatment) variable
$X \in \{0,1,\dots,K-1\} =: \mathcal X$, and a discrete mediator
$Z \in \{0,1,\dots,M-1\} =: \mathcal Z$, and a binary outcome $Y \in \{0,1\}=: \mathcal Y$.
We fix a baseline arm $x_0 \in \mathcal X$.
The choice of $x_0$ is predetermined based on domain knowledge in practice.




{\bf Best-arm identification.}
In the fixed-confidence best-arm identification (BAI) setting,
a learner sequentially samples arms and aims to identify the optimal-arm
with probability at least $1-\delta$ for a prescribed confidence level
$\delta\in(0,1)$.
A sequential algorithm consists of a sampling rule, a stopping time
$\tau_\delta$, and a recommendation rule $\hat x(\tau_\delta)$.
An algorithm is said to be \emph{$\delta$-correct} if
$\mathbb P\!\left(\tau_\delta<\infty,
\hat x(\tau_\delta) \neq x^\star
\right)
\le \delta$,
where $x^\star$ denotes the optimal-arm under the true model.
The objective is to design a $\delta$-correct algorithm that minimizes
the expected stopping time $\mathbb E[\tau_\delta]$.

\section{Research Objective}


In this paper, we focus exclusively on the first term of the NDE, which we call the natural direct potential outcome (NDPO). 
\begin{definition}[NDPO]
For each subject $\bm{u}$,
we define the unit-level natural direct potential outcome (NDPO) at a reference $x_0$ as $Y_{x,Z_{x_0}}(\bm{u})$.
Its population-level expectation is defined as $\mathbb{E}[Y_{x,Z_{x_0}}]:=\mathbb{E}_{\bm{U}}[Y_{x,Z_{x_0}}(\bm{U})]$.
\end{definition}
We do not study the contrast with the baseline $Y_{x_0}$.
The expectation of NDPO
means:
``\emph{the expected outcome if we set the arm $X$ to $x$, while keeping the mediator $Z$ at the level it would have taken under the reference arm $x_0$.}"
NDPO is a nested counterfactual quantity that measures the influence of the intervention $X = x$, removing the effect transmitted through $Z$.

In causal bandit problems, the conventional objective is to maximize the interventional mean  $\mathbb{E}[Y_x]=\mathbb{E}[Y|do(X=x)]$ \citep{Lattimore2016, Lee2018, Lee2019}.
This paper studies the maximization of the expectation of the NDPO.
We denote the optimal-arm w.r.t. the expected NDPO under SCM ${\mathcal{M}}$ as follows:
\begin{equation}
x^\star:= \arg\max_{x\in\mathcal X} \mathbb{E}[Y_{x,Z_{x_0}}].
\end{equation}
The optimal-arm w.r.t. the expected NDPO is the arm that maximizes the expectation of NDPO.
We present motivating examples illustrating why we study the maximization of the expectation of NDPO in Appendix \ref{app-moti}.

\section{Bridging Causal Inference and Best Arm Identification}

{Population-level identification of causal quantities has been extensively studied within the causal inference framework \citep{Pearl09}. 
In this section, we establish the population-level identification of the expected NDPO using interventional distribution $\mathbb{P}(Z,Y|do(X=x))$ for a discrete $Y$ and a connection between the causal inference framework and best arm identification (BAI) theory \citep{garivier2016optimalbestarmidentification}.}
All proofs are provided in Appendix~\ref{appa}.

{\bf Population-level causal identification of the expected NDPO under the causal bandit setting.}
{We consider a causal bandit setting in which researchers sequentially collect samples by pulling arms in order to identify the optimal-arm w.r.t. NDPO.}
{Pulling arm $x$ corresponds to performing the intervention $do(X=x)$ \citep{Lattimore2016}.
In our mediation setting, pulling arm $x$ yields i.i.d. samples from the interventional distribution $\mathbb{P}(Z,Y|do(X=x))$.
}

{To identify $\mathbb{E}[Y_{x,Z_{x_0}}]$ from $\mathbb{P}(Z,Y|do(X=x))$, we impose the following assumption on potential outcomes.}


\begin{assumption}[Assumption for Identification for NDPO]
\label{idenNDE}
$Y_{x,z}\indep Z_{x'}$ for all $x,x'\in\mathcal X$ and $z\in\mathcal Z$.
\end{assumption}
Here, $x'$ denotes an arbitrary reference level. 
\citet{Pearl2001} considers the reference-specific condition (which we call Assumption 1' here):
``$Y_{x,z}\indep Z_{x_0}$ for all $x\in\mathcal X$ and $z\in\mathcal Z$ for a fixed $x_0$.''
Formally, Assumption 1 appears stronger, as it requires independence w.r.t. ($Z_{x'}$) for every ($x'$), rather than a single reference level ($x_0$). 
However, in practice, such reference-specific assumptions are typically understood to hold irrespective of the particular choice of baseline level.
That is, Assumption 1' is implicitly required to hold for any admissible $x_0$.
More importantly, \citet{Pearl2001} provides a graphical characterization of Assumption 1':
\begin{equation}
Y \perp Z \quad \text{in } G_{\underline X,\underline Z},
\end{equation}
that is, $Y$ is d-separated from $Z$ in the graph obtained by removing all outgoing edges from $X$ and $Z$. 
This condition is purely graphical and therefore does not depend on the specific value of $x'$. 
Consequently, under the graphical interpretation, Assumptions 1 and 1' are equivalent, as they correspond to the same d-separation condition. 
Neither Assumption 1 nor Assumption 1' is testable from observed data alone. 
In applications, what can be assessed is the plausibility of the underlying causal graph. 
Both assumptions basically correspond to the absence of unmeasured confounding between $Z$ and $Y$.


Then, we have the following identification theorem for a discrete $Y$:
\begin{restatable}{theorem}{Theoremone}
\label{thm:nde_identification}
Under SCM $\mathcal M$ and Assumption~\ref{idenNDE},
the expected NDPO is identified
by $\theta(x)$, where
\begin{align}
\theta(x)
&= \sum_{y,z} yP_x(y|z)P_{x_0}(z),
\label{eq:theta_id}
\end{align}
where $P_x(y|z)=\mathbb{P}(Y=y | Z=z,do(X=x))$ and $P_x(z)=\mathbb{P}(Z=z | do(X=x))$.
\end{restatable}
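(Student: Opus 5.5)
The plan is to decompose the nested counterfactual by conditioning on the value of $Z_{x_0}$, use Assumption~\ref{idenNDE} to decouple the two counterfactual factors, and then express each factor through interventional quantities available in the causal bandit setting. Since $Y$ is binary (more generally discrete), $\mathbb{E}[Y_{x,Z_{x_0}}]=\sum_y y\,\mathbb{P}(Y_{x,Z_{x_0}}=y)$, so it suffices to identify $\mathbb{P}(Y_{x,Z_{x_0}}=y)$.

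\textbf{Step 1 (composition and decoupling).} On the event $\{Z_{x_0}=z\}$, the SCM definition of the nested counterfactual gives $Y_{x,Z_{x_0}}(\bm u)=Y_{x,z}(\bm u)$. Summing over the finite set $\mathcal Z$ therefore yields
\begin{equation*}
\mathbb{P}(Y_{x,Z_{x_0}}=y)=\sum_{z}\mathbb{P}(Y_{x,z}=y,\,Z_{x_0}=z)=\sum_z \mathbb{P}(Y_{x,z}=y)\,\mathbb{P}(Z_{x_0}=z),
\end{equation*}
where the last equality is Assumption~\ref{idenNDE} with $x'=x_0$. The second factor is directly interventional: $\mathbb{P}(Z_{x_0}=z)=P_{x_0}(z)$.

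\textbf{Step 2 (identifying $\mathbb{P}(Y_{x,z}=y)$ from $\mathbb{P}(Z,Y\mid do(X=x))$).} This is the step I expect to be the main obstacle, since we may only intervene on $X$, not jointly on $(X,Z)$. In $\mathcal M_x$ we have $Z=Z_x$, and by composition $Y=Y_{x,Z_x}$, so on $\{Z_x=z\}$ we have $Y=Y_{x,z}$. Hence
\begin{equation*}
P_x(y\mid z)=\mathbb{P}(Y_{x,z}=y\mid Z_x=z)=\mathbb{P}(Y_{x,z}=y),
\end{equation*}
where the second equality uses Assumption~\ref{idenNDE} with $x'=x$. This is exactly why the assumption is imposed for all $x'$ rather than only for $x_0$. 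The step requires $P_x(z)>0$ for the conditional to be defined. I would handle this by adopting a positivity convention: either assume $P_x(z)>0$ whenever $P_{x_0}(z)>0$, or note that terms with $P_{x_0}(z)=0$ vanish and restrict the sum to the support of $P_{x_0}$.

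\textbf{Step 3 (assemble).} Substituting Step 2 into Step 1 and multiplying by $y$ and summing gives
\begin{equation*}
\mathbb{E}[Y_{x,Z_{x_0}}]=\sum_{y,z} y\,P_x(y\mid z)\,P_{x_0}(z)=\theta(x),
\end{equation*}
which is a functional of the interventional distributions $\mathbb{P}(Z,Y\mid do(X=x))$ and $\mathbb{P}(Z\mid do(X=x_0))$ only. This establishes identification.
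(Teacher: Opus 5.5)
Your proposal is correct and follows essentially the same argument as the paper's proof. Both condition on $Z_{x_0}$, drop the conditioning via Assumption~\ref{idenNDE} with $x'=x_0$, reintroduce conditioning on $Z_x$ via Assumption~\ref{idenNDE} with $x'=x$, and use consistency ($Y_x=Y_{x,z}$ on $\{Z_x=z\}$) to arrive at $P_x(y\mid z)$.

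The paper leaves positivity implicit; it is later guaranteed by Assumption~\ref{ass:uniform_interior}. Your caveat about it needs your first option, not the second, because restricting to the support of $P_{x_0}$ does not cover a $z$ with $P_{x_0}(z)>0=P_x(z)$. For such a $z$, $\mathbb{P}(Y_{x,z}=y)$ cannot be recovered from $\mathbb{P}(Z,Y\mid do(X=x))$.
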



This differs in form from Theorem 1 of \citet{Pearl2001}: we express $\theta(x)$ using the joint interventional distribution $\mathbb{P}(Z,Y|do(X=x))$, rather than using $\mathbb{P}(Y_{x,z})$ and $\mathbb{P}(Z_{x_0})$ \citep{Pearl2001}.
The equality $\mathbb{P}(Y_{x,z}) = \mathbb{P}(Y|do(X=x),do(Z=z))$ corresponds to intervening on both $X$ and $Z$, which does not match our causal bandit setting.

{\bf Connection to BAI theory.}
Estimation of mediated effects from a fixed dataset has been widely studied in the causal inference literature.
For example, $\theta(x)$ can be directly estimated using plug-in empirical estimators of $P_x(y|z)$ and  $P_x(z)$ \citep{Imai2010b}. 
Moreover, \citet{VanderWeele2010,Tchetgen2012} used parametric or semiparametric regression models to estimate mediated effects.

In contrast, under a bandit setting, data are collected sequentially.
In this setting, the researcher is motivated to identify the target using as few samples as possible while guaranteeing correctness with high probability \citep{garivier2016optimalbestarmidentification}.
Accordingly, our goal is to develop a BAI algorithm that identifies the arm maximizing the NDPO through $\theta(x)$ with minimal sample complexity, subject to a high-probability correctness guarantee.

We next investigate the BAI problem of identifying the arm that maximizes $\theta$, apart from the causal inference perspective, rather than directly optimizing the expected NDPO.

\section{Sample Complexity Lower Bound}
\label{sec:lower_bound}

We derive a lower bound on the expected stopping time of any $\delta$-correct algorithm that identifies the arm maximizing $\theta$, analogous to the classical fixed-confidence BAI lower bounds for a binary outcome $Y\in\{0,1\}$ (for simplicity) \citep{lairobbins1985, garivier2016optimalbestarmidentification}.
This reveals the fundamental limitations of the BAI framework.

To study BAI for $\theta$, we introduce the following notation.
Let $\mathcal P$ denote the set of all $P$, where $P$
is a collection of interventional distributions
$(P_x)_{x\in\mathcal X}$,
where $P_x$ denotes $\mathbb{P}(Z,Y|do(X=x))$.
For each $P \in \mathcal P$, we define
\begin{align}
\label{eq:theta_P_def}
&\theta_P(x):=
\sum_{z\in\mathcal Z}
P_x(1| z)\,
P_{x_0}(z),\\
&x^\star(P):=
\arg\max_{x\in\mathcal X}
\theta_P(x).
\end{align}
We impose the following assumption.
\begin{assumption}[Unique optimum]
\label{ass:unique_opt}
For the true model $P \in \mathcal P$,
the maximizer $x^\star(P)$ is unique.
\end{assumption}

Let $\tau_\delta$ denote the stopping time at confidence level $\delta$, and let $\hat x_{\tau_\delta}$ be the recommended arm.
We define a $\delta$-correct algorithm for the optimal-arm w.r.t. $\theta$ as follows:
\begin{definition}[$\delta$-correct algorithm]
Let $\delta\in(0,1)$.
An algorithm for identifying the optimal-arm w.r.t. $\theta$ is $\delta$-correct if, for every
$P \in \mathcal P$,
\begin{equation}
\mathbb P_P\!\left(\tau_\delta<\infty, 
\hat x_{\tau_\delta}
\neq
x^\star(P)
\right)
\le \delta.
\end{equation}
\end{definition}

To derive the lower bound, we consider an alternative model in which the optimal-arm differs from that in the underlying model.
For a fixed model $P \in \mathcal P$, define the set of alternatives:
$\mathrm{Alt}(P):=\left\{
Q \in \mathcal P :
x^\star(Q) \neq x^\star(P)
\right\}$.
Equivalently, $Q \in \mathrm{Alt}(P)$ if there exists
$x \neq x^\star(P)$ such that
\begin{equation}
\label{eq:alt_P}
\theta_Q(x)
\ge
\theta_Q(x^\star(P)).
\end{equation}

For two models $P,Q \in \mathcal P$ and an arm $x\in\mathcal X$, define the
interventional joint distribution
\begin{equation}
P_x(z,y)
:=
\mathbb{P}_P(Z=z, Y=y | do(X=x)),
\end{equation}
and similarly $Q_x(z,y)$ for model $Q$.
We denote the corresponding marginals and conditionals by
\begin{equation}
P_x(z)
:=
\sum_{y\in\mathcal Y} P_x(z,y),
P_x(y| z)
:=
\frac{P_x(z,y)}{P_x(z)},
\end{equation}
and similarly for $Q_x(z)$ and $Q_x(y| z)$.

Since $P_x(z,y)=P_x(z)P_x(y| z)$, the KL divergence decomposes as
\begin{align}
\label{eq:kl_decomp}
&\mathrm{KL}(P_x \Vert Q_x)
=
\mathrm{KL}\big(P_x(Z)\Vert Q_x(Z)\big)
\nonumber\\
&+
\sum_{z\in\mathcal Z}
P_x(z)
\,
\mathrm{KL}
\big(
P_x(Y| z)
\Vert
Q_x(Y| z)
\big).
\end{align}

The KL decomposition in \eqref{eq:kl_decomp} splits
$\mathrm{KL}(P_x\Vert Q_x)$ into a mediator term and a conditional outcome term weighted by $P_x(z)$.
Hence discrepancies in $P_x(1| z)$ at rare mediator values contribute little to the total divergence,
so distinguishing such differences requires more samples from arm $x$.

Then, we have the following theorem:
\begin{restatable}{theorem}{Theoremtwo}[Instance-dependent lower bound]
\label{thm:nde_lower_bound}
Let $\delta\in(0,1)$.
Under Assumption~\ref{ass:unique_opt},
for any $\delta$-correct algorithm and any
$P \in \mathcal P$,
\begin{equation}
\mathbb E_P[\tau_\delta]
\;\ge\;
T^\star(P)\,
\mathrm{kl}(\delta,1-\delta),
\end{equation}
where
$\mathrm{kl}(\delta,1-\delta)
=
\delta \log\!\frac{\delta}{1-\delta}
+
(1-\delta)\log\!\frac{1-\delta}{\delta}$
and
\begin{equation}
\label{eq:char_time}
(T^\star(P))^{-1}
=
\sup_{w\in\Sigma_K}
\inf_{Q\in\mathrm{Alt}(P)}
\sum_{x\in\mathcal X}
w_x
\,
\mathrm{KL}(P_x\Vert Q_x)
\end{equation}
with
$\Sigma_K:=\{
w\in\mathbb R_+^K:
\sum_{x\in\mathcal X} w_x = 1
\}$.
\end{restatable}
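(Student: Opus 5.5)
We follow the classical change-of-measure argument of \citet{garivier2016optimalbestarmidentification}. The only model-specific ingredient is that each observation from arm $x$ is the pair $(Z,Y)$ drawn from the joint interventional distribution $P_x$. The relevant per-sample divergence is therefore $\mathrm{KL}(P_x\Vert Q_x)$ on $\mathcal Z\times\mathcal Y$, which is exactly the quantity appearing in \eqref{eq:char_time}.

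\textbf{Step 1: transportation inequality.} Fix $P\in\mathcal P$ satisfying Assumption~\ref{ass:unique_opt} and any $Q\in\mathrm{Alt}(P)$. We may assume $\mathbb E_P[\tau_\delta]<\infty$; otherwise the bound is trivial, and in particular $\tau_\delta<\infty$ $P$-a.s. Let $N_x(t)$ be the number of pulls of arm $x$ up to time $t$. Let $L_t$ be the log-likelihood ratio of the observations $(X_s,Z_s,Y_s)_{s\le t}$ under $P$ versus $Q$. Because the sampling rule depends only on the past (and possibly internal randomization common to both models), the arm-choice factors cancel. Hence
\[
L_t=\sum_{s\le t}\log\frac{P_{X_s}(Z_s,Y_s)}{Q_{X_s}(Z_s,Y_s)}.
\]
Wald's identity, applied at the stopping time $\tau_\delta$ with finite expectation, gives
\[
\mathbb E_P[L_{\tau_\delta}]=\sum_x \mathbb E_P[N_x(\tau_\delta)]\,\mathrm{KL}(P_x\Vert Q_x).
\]
The data-processing inequality, as in Lemma~1 of \citet{garivier2016optimalbestarmidentification}, applied to the event $E=\{\hat x_{\tau_\delta}=x^\star(P)\}\in\mathcal F_{\tau_\delta}$, then yields
\[
\sum_x \mathbb E_P[N_x(\tau_\delta)]\,\mathrm{KL}(P_x\Vert Q_x)\ \ge\ \mathrm{kl}\big(\mathbb P_P(E),\mathbb P_Q(E)\big).
\]

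\textbf{Step 2: use $\delta$-correctness.} $\delta$-correctness under $P$ gives $\mathbb P_P(E)\ge 1-\delta$. Since $x^\star(Q)\neq x^\star(P)$, $\delta$-correctness under $Q$ gives $\mathbb P_Q(E)\le\delta$. If $\delta\ge 1/2$ the bound is trivial, because $\mathrm{kl}(\delta,1-\delta)\le 0$ fails to matter. If $\delta<1/2$, monotonicity of the binary $\mathrm{kl}$ gives $\mathrm{kl}(\mathbb P_P(E),\mathbb P_Q(E))\ge\mathrm{kl}(1-\delta,\delta)=\mathrm{kl}(\delta,1-\delta)$.

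One subtlety arises when $x^\star(Q)$ is not unique, since the definition of $\mathrm{Alt}(P)$ then allows ties. We will either interpret $\delta$-correctness for such $Q$ as requiring any maximizer, or restrict to strict alternatives. Strict alternatives suffice because the infimum over $\mathrm{Alt}(P)$ equals the infimum over strict alternatives: the KL terms are continuous in $Q$ on the relevant set, and perturbing $Q_x(1\mid z)$ slightly breaks the tie.

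\textbf{Step 3: optimize.} Set $w_x=\mathbb E_P[N_x(\tau_\delta)]/\mathbb E_P[\tau_\delta]\in\Sigma_K$. Then for every $Q\in\mathrm{Alt}(P)$,
\[
\mathbb E_P[\tau_\delta]\sum_x w_x\,\mathrm{KL}(P_x\Vert Q_x)\ge\mathrm{kl}(\delta,1-\delta).
\]
Taking the infimum over $Q$ and bounding by the supremum over $w$ gives $\mathbb E_P[\tau_\delta]\,(T^\star(P))^{-1}\ge\mathrm{kl}(\delta,1-\delta)$, which is the claim.

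\textbf{Expected obstacle.} The main obstacle is technical rather than conceptual. First, KL terms may be infinite when $Q_x$ does not dominate $P_x$; such $Q$ only make the inequality trivially true. Second, the tie/boundary issue noted in Step 2 must be handled. Third, $T^\star(P)$ must be finite, which requires $\mathrm{Alt}(P)$ to contain $Q$ with finite divergence. This holds by modifying $P_x(1\mid z)$ for some suboptimal $x$ at a $z$ with $P_{x_0}(z)>0$, which raises $\theta_Q(x)$ above $\theta_Q(x^\star(P))$. The adaptive, non-i.i.d.\ sampling is handled entirely by Wald's identity, since each pull of $x$ is a fresh draw from $P_x$.
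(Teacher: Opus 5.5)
Your route is the paper's: the transportation inequality (which you derive from Wald's identity plus data processing, i.e.\ Lemma~1 of Garivier and Kaufmann, which the paper cites directly), normalization by $\mathbb E_P[\tau_\delta]$ to get $w\in\Sigma_K$, then the infimum over $Q\in\mathrm{Alt}(P)$ and the supremum over $w$. You handle two points the paper ignores. First, ties in $\mathrm{Alt}(P)$: you reduce to strict alternatives by a small tie-breaking perturbation, which should be taken in a direction that keeps $Q_x$ dominating $P_x$. Second, infinite KL terms.

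The one incorrect step is your dismissal of $\delta\ge 1/2$. In fact $\mathrm{kl}(\delta,1-\delta)=(1-2\delta)\log\frac{1-\delta}{\delta}$ is strictly positive for every $\delta\neq 1/2$, so the case $\delta>1/2$ is not trivial. Your monotonicity step $\mathrm{kl}(\mathbb P_P(E),\mathbb P_Q(E))\ge \mathrm{kl}(1-\delta,\delta)$ needs $\mathbb P_Q(E)\le\delta\le 1-\delta\le \mathbb P_P(E)$, i.e.\ $\delta\le 1/2$. For $\delta>1/2$ the constraints allow $\mathbb P_P(E)=\mathbb P_Q(E)=1/2$, which gives $0$.

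This cannot be repaired, because the statement is false there. Take $K=2$ and the algorithm that stops after one pull and outputs a uniformly random arm. It is $\delta$-correct for all $\delta\ge 1/2$ and has $\mathbb E_P[\tau_\delta]=1$. Yet $T^\star(P)$ can be made arbitrarily large by shrinking the gap $\theta_P(x^\star(P))-\max_{x\neq x^\star(P)}\theta_P(x)$. So you should claim the bound only for $\delta\in(0,1/2]$. The paper's proof carries the same implicit restriction when it writes $\mathrm{kl}(\delta,1-\delta)$ directly on the right-hand side of the transportation inequality.

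Your third ``obstacle'' is also unnecessary and slightly misdirected. A $Q\in\mathrm{Alt}(P)$ with finite divergence shows $T^\star(P)>0$, not $T^\star(P)<\infty$. And your inequality already covers both degenerate values of $T^\star(P)$ without it.
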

Here $\mathbb R_+^K:=\{v\in\mathbb R^K: v_x\ge 0, \forall x\}$.
The vector $w\in\Sigma_K$ represents asymptotic sampling proportions: $w_x$ is the fraction of samples allocated to arm $x$ in the long run. The $\mathrm{kl}(\cdot,\cdot)$ denotes the binary relative entropy.
The sup--inf form in~\eqref{eq:char_time} chooses proportions $w$ that maximize the
worst-case information rate for distinguishing the true model $P$ from alternatives
under which the identity of the optimal-arm differs.




{\bf Technical difficulties in BAI for $\theta$.}
The lower bound in Theorem~\ref{thm:nde_lower_bound} highlights two essential differences from classical best-arm identification.
In classical BAI, the objective is
$\mu_x = \mathbb E_P[Y| do(X=x)]$,
and the characteristic time reduces to
$(T^\star_{\mathrm{BAI}}(P))^{-1}
=
\sup_{w\in\Sigma_K}
\inf_{x\neq x^\star(P)}
\sum_{a\in\mathcal X}
w_a
\mathrm{KL}
(
P_a
\Vert
Q^{(x)}_a
)$,
where $Q^{(x)}$ denotes an alternative model that coincides with $P$
on all arms except $x^\star(P)$ and $x$, and is chosen so that
$x$ becomes optimal under $Q^{(x)}$
\citep{garivier2016optimalbestarmidentification}.
This yields a low-dimensional convex program in which, for each suboptimal-arm
$x\neq x^\star(P)$ (called a \emph{competitor}), the alternative model
modifies only the true best arm and the single competing arm $x$.

In contrast, $\theta$ couples outcome behavior under arm $x$
with the mediator distribution under the baseline arm $x_0$.
The constraint \eqref{eq:alt_P}
depends jointly on mediator and outcome mechanisms,
so the inner minimization cannot be reduced to modifying only the best arm and a
single competitor while keeping all other arms fixed, as in classical BAI.
Instead, the baseline mediator distribution $Q_{x_0}(Z)$ enters the objective
for every arm through $\theta_Q(\cdot)$, creating an explicit coupling across arms.
Moreover, the KL decomposition
\eqref{eq:kl_decomp}
shows that deviations in
$P_x(y| z)$
are weighted by the mediator probability $P_x(z)$.
If decisive differences occur at mediator values with small probability mass,
the effective information rate is reduced,
increasing the characteristic time $T^\star(P)$.

These two features—the non-separable alternative constraint and the information that accumulates at the level of mediator–outcome
cells $(x,z)$
—do not arise in classical BAI.
We call each pair $(x,z)\in\mathcal X\times\mathcal Z$ a \emph{cell}, corresponding
to the conditional outcome parameter $P_x(y| z)$ 
and the frequency with which
mediator value $z$ occurs under arm $x$.

\section{A Track-and-Stop Algorithm for Causal Mediation Analysis}
\label{sec:algorithm}

We adapt the Track-and-Stop (TaS) framework for fixed-confidence BAI
\citep{garivier2016optimalbestarmidentification} to the objective $\theta$.
TaS combines forced exploration, a plug-in solution of the characteristic
max--min allocation problem, and a generalized likelihood ratio stopping rule.
However, the structure of the lower bound in
Section~\ref{sec:lower_bound} prevents a direct application of the classical
arm-local TaS analysis. Two modifications are required:

\textbf{(i) Cell-level information control.}
By the KL decomposition~\eqref{eq:kl_decomp}, information accumulates at the
mediator–outcome cell level $(x,z)$.
Arm-level forced exploration alone does not ensure that all cell counts
$N_{x,z}(t)$ diverge, which is necessary for consistent estimation of
$P_x(y| z)$
and evaluation of the likelihood-ratio statistic used in the stopping rule.

\textbf{(ii) Non-separable alternative optimization.}
The alternative constraint defining $\mathrm{Alt}(P)$ couples arms through the
baseline mediator distribution $P_{x_0}(Z)$.
As a result, the inner minimization in the characteristic time cannot be
reduced to a single best-arm–competitor pair, and instead yields a
bi-convex projection problem.

By modifying these components, we develop a new algorithm, TaS-NDPO, which is presented in Algorithm \ref{alg:tas_ndpo}. 
The following subsections detail the resulting sampling rule and the associated optimization procedure.


\subsection{Plug-in Estimates}
\label{subsec:empirical_model}


We first define the empirical model and the corresponding plug-in estimators of $\theta(x)$.
At round $s=1,2,\dots$, the learner selects $X_s\in\mathcal X$
and observes $(Z_s,Y_s)\sim P_{X_s}$.
Define the arm and cell counts at the round $t$
\begin{align}
&N_x(t):=\sum\nolimits_{s=1}^t \mathbf{1}\{X_s=x\},\\
&N_{x,z}(t):=\sum\nolimits_{s=1}^t \mathbf{1}\{X_s=x,Z_s=z\},\\
&S_{x,z}(t):=\sum\nolimits_{s=1}^t \mathbf{1}\{X_s=x,Z_s=z,Y_s=1\}.
\end{align}
Then, the empirical
distributions are
\begin{equation}
\hat P_x(z;t):=\frac{N_{x,z}(t)}{N_x(t)},
\hat P_x(1| z;t):=\frac{S_{x,z}(t)}{N_{x,z}(t)}.
\end{equation}




The empirical joint model at time $t$ is the collection
$\hat P(t):=(\hat P_x(\,\cdot\,;\,t))_{x\in\mathcal X}$,
where for each arm $x\in\mathcal X$, $\hat P_x(\,\cdot\,;\,t)$ denotes the empirical
joint distribution of $(Z,Y)$ under arm $x$, with probability mass function
$\hat P_x(z,y;\,t):=\hat P_x(z;\,t)\,\hat P_x(y| z;\,t)$ for $(z,y)\in\mathcal Z\times\{0,1\}$.
The plug-in estimate of $\theta$ is
\begin{equation}
\hat\theta_t(x):=
\sum_{z\in\mathcal Z}
\hat P_x(1| z;t)\,
\hat P_{x_0}(z;t),
\end{equation}
and the empirical best arm is the smallest index in the argmax set
\begin{equation}
\hat x(t)\in\arg\max_{x\in\mathcal X}\hat\theta_t(x).
\end{equation}


\subsection{Computation of the Plug-in Allocation}
\label{subsec:optimization_details}

In contrast to classical BAI, the alternative constraint couples mediator
and outcome mechanisms, yielding a non-separable KL projection with bilinear
structure of Eq.~\eqref{eq:char_time}. 
In this section, after reducing the dimension of the optimization, we introduce a cutting-set method that solves this optimization. 

{\bf Plug-in allocation.}
Replacing the true model $P$ by $\hat P(t)$ in the characteristic-time
optimization, 
we define the alternative set
$\mathrm{Alt}(\hat P(t)):=\{Q\in\mathcal P : x^\star(Q)\neq \hat x(t)\}$
and the target allocation
\begin{equation}
    \label{eq:plugin_allocation}
    \hat w(t)\in\argmax_{w\in\Sigma_K}
    \inf_{Q\in\mathrm{Alt}(\hat P(t))}
    \sum_{x\in\mathcal X}
    w_x\,\mathrm{KL}(\hat P_x(t)\Vert Q_x).
\end{equation}

{\bf Decomposition over competitors.}
Let $P\in\mathcal P$ admit a unique maximizer $x^\star(P)$.
The alternative set is given as
\begin{equation}
\mathrm{Alt}(P):=
\bigcup_{x'\neq x^\star(P)}
\{
Q\in\mathcal P :
\theta_Q(x') \ge \theta_Q(x^\star(P))
\}.
\end{equation}
Accordingly, the inner infimum in
\eqref{eq:plugin_allocation}
decomposes into $K-1$ subproblems indexed by competitors
$x'\neq x^\star(P)$.

{\bf Reduction to three arms.}
Fix a competitor $x'\neq x^\star(P)$ and consider
\begin{equation}
\label{eq:inner_full}
\inf_{Q\in\mathcal P}
\sum_{a\in\mathcal X}
w_a\,\mathrm{KL}(P_a\Vert Q_a)
\text{ s.t. }
\theta_Q(x') \ge \theta_Q(x^\star(P)).
\end{equation}

\begin{restatable}{proposition}{Propositionone}[Reduction to three arms]
\label{prop:three_arm_reduction}
For the inner minimization oracle in \eqref{eq:inner_full}, it is without
loss of optimality to restrict attention to solutions satisfying
$Q_a = P_a$ for all $a\notin\{x_0,x^\star(P),x'\}$.
\end{restatable}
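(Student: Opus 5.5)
The argument is a direct projection: starting from any feasible point of \eqref{eq:inner_full}, we reset the irrelevant arms to the truth and check that feasibility is kept while the objective does not increase. Fix $w\in\Sigma_K$, the competitor $x'\neq x^\star(P)$, and write $x^\star:=x^\star(P)$ and $S:=\{x_0,x^\star,x'\}$.

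\textbf{Step 1: the constraint only involves arms in $S$.} By \eqref{eq:theta_P_def},
\begin{equation*}
\theta_Q(x')-\theta_Q(x^\star)=\sum_{z\in\mathcal Z}\bigl(Q_{x'}(1\mid z)-Q_{x^\star}(1\mid z)\bigr)\,Q_{x_0}(z).
\end{equation*}
This depends only on $Q_{x'}$, $Q_{x^\star}$ and the mediator marginal of $Q_{x_0}$, so only on $(Q_a)_{a\in S}$. (If some of $x_0,x^\star,x'$ coincide, $S$ simply has fewer elements; the argument is unchanged.)

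\textbf{Step 2: projection.} Take any feasible $Q\in\mathcal P$ for \eqref{eq:inner_full}. Define $\tilde Q$ by $\tilde Q_a:=Q_a$ for $a\in S$ and $\tilde Q_a:=P_a$ for $a\notin S$. Then:
\begin{itemize}
\item $\tilde Q\in\mathcal P$, since $\mathcal P$ is a product over arms of the sets of distributions on $\mathcal Z\times\mathcal Y$ with no cross-arm restriction.
\item $\tilde Q$ is feasible, because by Step 1 its constraint value equals that of $Q$.
\item The objective does not increase: the terms for $a\in S$ are unchanged, and for $a\notin S$ we have $w_a\,\mathrm{KL}(P_a\Vert \tilde Q_a)=0\le w_a\,\mathrm{KL}(P_a\Vert Q_a)$, by nonnegativity of KL and $\mathrm{KL}(P_a\Vert P_a)=0$.
\end{itemize}

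\textbf{Step 3: conclusion.} Taking the infimum over feasible $Q$, the infimum over the restricted class $\{Q:\ Q_a=P_a\ \forall a\notin S\}$ equals the full infimum. If a minimizer of \eqref{eq:inner_full} exists, Step 2 applied to it gives a minimizer in the restricted class. If the infimum is not attained, the same map sends any minimizing sequence to a restricted minimizing sequence.

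\textbf{Main obstacle.} The only delicate point is making sure $\mathcal P$ has no coupling across arms, for example through a shared $P_x(z)$ or a common structural model. In the paper $\mathcal P$ is the set of collections of interventional distributions $(P_x)_{x\in\mathcal X}$, which I will read as an unrestricted product set. I would also handle conditionals $Q_x(y\mid z)$ at cells with $Q_x(z)=0$ by the convention that $\theta_Q$ uses any version. This does not affect the argument, since arms outside $S$ are reset to $P_a$ and those inside $S$ are left untouched.
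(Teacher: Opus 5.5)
Your proposal is correct and follows essentially the same argument as the paper: the constraint $\theta_Q(x')\ge\theta_Q(x^\star)$ involves only $Q_{x_0},Q_{x^\star},Q_{x'}$, and resetting every other $Q_a$ to $P_a$ keeps feasibility while bringing $w_a\,\mathrm{KL}(P_a\Vert Q_a)$ down to its minimum value $0$. Your weak-inequality version is slightly more careful than the paper's claim that any deviation \emph{strictly} increases the objective, since that claim fails for arms with $w_a=0$ and your argument covers them (and also the case where the infimum is not attained).
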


Hence, the inner minimization effectively involves only three arms:
$x_0$, $x^\star(P)$, and a single competitor $x'$.

{\bf Reduced inner problem.}
Let $x^\star = x^\star(P)$ and fix $x'\neq x^\star$.
After restriction, the free variables are
$q_0(z) := Q_{x_0}(z),
r^\star(z) := Q_{x^\star}(1| z),
r'(z) := Q_{x'}(1| z), z\in\mathcal Z$.
The feasible set is
$\Delta_{|\mathcal Z|}
\times
[0,1]^{|\mathcal Z|}
\times
[0,1]^{|\mathcal Z|}$,
where
$\Delta_{|\mathcal Z|}
:=
\{
q\in\mathbb R_+^{|\mathcal Z|}
:
\sum_{z\in\mathcal Z} q(z)=1
\}$
is the probability simplex over $\mathcal Z$.
This set is compact and convex.
The objective becomes
\begin{equation}
\begin{aligned}
\label{eq:reduced_objective}
& w_{x_0}
\sum\nolimits_{z}
\mathrm{KL}(
\hat P_{x_0}(z)\,\Vert\,q_0(z)
)
\\
&+
w_{x^\star}
\sum\nolimits_{z}
\hat P_{x^\star}(z)
\mathrm{KL}(
\hat P_{x^\star}(1| z)
\Vert
r^\star(z)
)
\\
&+
w_{x'}
\sum\nolimits_{z}
\hat P_{x'}(z)
\mathrm{KL}(
\hat P_{x'}(1| z)
\Vert
r'(z)
).
\end{aligned}
\end{equation}


The original constraint is
\begin{equation}
\sum\nolimits_z r'(z)q_0(z)\ge\sum\nolimits_z r^\star(z)q_0(z).
\end{equation}

This inequality must hold with equality at an optimal solution. Indeed, if
there were an optimal solution for which the inequality were strict, then
there would exist a sufficiently small $\alpha>0$ such that replacing
\begin{equation}
\begin{aligned}
&r^\star(z)\leftarrow(1-\alpha)r^\star(z)+\alpha \hat P_{x^\star}(1|z),\\
&r'(z)\leftarrow(1-\alpha)r'(z)+\alpha \hat P_{x'}(1|z).
\end{aligned}
\end{equation}

reduces the objective value in \eqref{eq:reduced_objective}, while not
changing the sign of the difference between the left-hand side and
right-hand side of the constraint. Therefore, no optimal solution can satisfy
the constraint with strict inequality, and the constraint reduces to the
scalar bilinear equality

\begin{equation}
\label{eq:bilinear_constraint}
\sum\nolimits_{z}
r'(z)\,q_0(z)
=
\sum\nolimits_{z}
r^\star(z)\,q_0(z).
\end{equation}

\begin{restatable}{proposition}{Propositiontwo}[Bi-convex structure]
\label{prop:biconvex}
The optimization problem defined by
\eqref{eq:reduced_objective}--\eqref{eq:bilinear_constraint}
is convex in $(r^\star,r')$ for fixed $q_0$,
and convex in $q_0$ for fixed $(r^\star,r')$,
but not jointly convex.
\end{restatable}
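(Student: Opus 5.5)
We prove the three claims in turn: convexity of the objective in each block, convexity of the constraint set in each block, and failure of joint convexity.

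\textbf{Objective, separately in each block.} The objective \eqref{eq:reduced_objective} is a sum of three terms, and each term depends on only one block of variables.

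The first term is $w_{x_0}\sum_z \mathrm{KL}(\hat P_{x_0}(z)\Vert q_0(z))$. Its summands are $-\hat P_{x_0}(z)\log q_0(z)$ plus constants. Since $-\log$ is convex, this term is convex in $q_0$ on $\Delta_{|\mathcal Z|}$, with value $+\infty$ allowed at the boundary.

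The second and third terms are nonnegative combinations of binary relative entropies $\mathrm{kl}(p\Vert r)$ with $p$ fixed. The map $r\mapsto -p\log r-(1-p)\log(1-r)$ is convex on $[0,1]$. Hence these terms are convex in $(r^\star,r')$.

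It follows that the objective is jointly convex in $(q_0,r^\star,r')$ as a function. The lack of joint convexity must therefore come entirely from the constraint.

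\textbf{Constraint, separately in each block.} Rewrite \eqref{eq:bilinear_constraint} as $g(q_0,r^\star,r'):=\sum_z (r'(z)-r^\star(z))q_0(z)=0$.

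For fixed $q_0$, $g$ is affine (in fact linear) in $(r^\star,r')$. Its zero set intersected with the box $[0,1]^{2|\mathcal Z|}$ is therefore a convex polytope.

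For fixed $(r^\star,r')$, $g$ is linear in $q_0$. Its zero set intersected with the simplex is again convex.

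A convex objective minimized over a convex feasible set is a convex program. This gives the two block-convexity claims.

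The same conclusion holds for the inequality form $\sum_z r'(z)q_0(z)\ge \sum_z r^\star(z)q_0(z)$, which the paper showed is equivalent at optimality. In each block this is a half-space, hence convex.

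\textbf{Failure of joint convexity.} It suffices to show that the feasible set $F=\{(q_0,r^\star,r'): g=0\}$, intersected with the domain, is not convex. I would use an explicit counterexample with $|\mathcal Z|=2$, writing $d(z)=r'(z)-r^\star(z)$ so that $g=\sum_z d(z)q_0(z)$.

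Take the two feasible points
\[
A=\bigl(q_0=(1,0),\ d=(0,1)\bigr), \qquad B=\bigl(q_0=(0,1),\ d=(1,0)\bigr).
\]
For $A$, $g=1\cdot 0+0\cdot 1=0$. For $B$, $g=0\cdot 1+1\cdot 0=0$. Both can be realized with $r^\star=0$ and $r'=d$.

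Their midpoint has $q_0=(1/2,1/2)$ and $d=(1/2,1/2)$, giving $g=1/2\neq 0$. So $F$ is not convex.

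Because the problem is posed as minimization over $F$, this already shows it is not a convex program. If the authors mean the inequality form $g\ge 0$, I would instead use the pair $(q_0,d)=((1,0),(1,-1))$ and $((0,1),(-1,1))$. Both give $g=1\ge 0$, while their midpoint $((1/2,1/2),(0,0))$ gives $g=0$. That pair does not violate $g\ge 0$, so I would rescale it to $d=(1,-2)$ and $(-2,1)$. Each endpoint still has $g=1$, but the midpoint has $d=(-1/2,-1/2)$ and $g=-1/2<0$. This shows the half-space set is also nonconvex.

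\textbf{Main obstacle.} The delicate point is interpretation rather than computation. The objective is jointly convex, so "not jointly convex" must be read as the constrained problem not being a convex program. This can also be seen from the fact that $g$ is bilinear and indefinite: its Hessian in $(q_0,d)$ has eigenvalues $\pm1$. I would state this interpretation explicitly and note that, as a function-level argument, the bilinear form $q^\top d$ is neither convex nor concave.
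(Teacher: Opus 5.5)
Your proof is correct and follows the same route as the paper. In each block, the objective is a nonnegative combination of KL terms that is convex in that block, and the constraint is affine in that block. Joint convexity fails only because the constraint is bilinear. The paper merely asserts this last point, so your explicit witness makes it rigorous: take $r^\star=0$ with $q_0=(1,0),\,r'=(0,1)$ and $q_0=(0,1),\,r'=(1,0)$; their midpoint gives $\sum_z q_0(z)\bigl(r'(z)-r^\star(z)\bigr)=1/2\neq 0$. Your observation that the objective itself is jointly convex, because it is block-separable, also pins down exactly where the nonconvexity lives.

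There is one slip in your optional aside on the inequality form. The points with $d=(1,-2)$ and $d=(-2,1)$ are not in the domain, since $d(z)=r'(z)-r^\star(z)\in[-1,1]$ whenever $r',r^\star\in[0,1]^{|\mathcal Z|}$. Shrink the positive entries rather than inflating the negative ones. The pairs $(q_0,d)=((1,0),(1/2,-1))$ and $((0,1),(-1,1/2))$ both give $g=1/2$. They are realized, e.g., by $r'=(1/2,0),\,r^\star=(0,1)$ and $r'=(0,1/2),\,r^\star=(1,0)$. Their midpoint has $d=(-1/4,-1/4)$ and $g=-1/4<0$.

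Because all these inequalities are strict, this witness also survives a small perturbation into the interior required by the uniform-interior assumption, where every KL term is finite. Your equality witness lies on the boundary of the simplex, which the paper's stated feasible set allows.
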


{\bf Cutting-set method.}
Even after reducing the dimension to three, the resulting problem is still a linear semi-infinite program in $w = (w_{x_0}, w_{x^\star}, w_{x'})$. This is because the constraint involving $(r'(z), q_0(z), r^\star(z))$ must hold over a continuous domain.


The cutting-set
method \citep{mutapcic2009cutting} starts with some initial set $\mathcal Q^{(1)}\subset\mathrm{Alt}(\hat P)$.
\footnote{In our implementation, the active set is initialized with one constraint for
each competitor arm. This avoids degenerate initial allocations and ensures
that each competitor is represented in the master problem. If a candidate
allocation assigns zero mass to an arm $a$, then the competitor subproblem
with $x'=a$ is selected as a hardest constraint and added to the active set;
the subsequent master update then accounts for this arm and assigns it
positive mass when needed.}
At each iteration $s=1,2,\dots$, the method maintains
a finite active set $\mathcal Q^{(s)}\subset\mathrm{Alt}(\hat P)$ and proceeds as follows:
(i) solving
$\max_{w\in\Sigma_K}
\min_{Q\in\mathcal Q^{(s-1)}}
\sum_x w_x \mathrm{KL}(\hat P_x\Vert Q_x)$ on $w$,
which is a finite-dimensional linear programming,
and
(ii) computing a most-violated alternative model
via the reduced bi-convex problem, and add it to $\mathcal Q^{(s)}$.


{\bf Optimization properties.}
Under mild conditions, including boundedness of the objective and
empirical means bounded away from $0$ and $1$, the cutting-set method
converges to a global optimum \citep[Section~5.2]{mutapcic2009cutting} assuming that (ii) is exact.
These conditions are satisfied in our setting when the empirical estimates
remain in $[\varepsilon,1-\varepsilon]$ for some $\varepsilon>0$.

Regarding the optimality of (ii), we solve it via alternating minimization.
The implementation details are provided in Appendix~\ref{app:inner_qp}.
Under standard regularity conditions,
block coordinate descent for bi-convex problems
converges to a stationary point \citep{Tseng2001}.
However, in general, such a stationary point may not be globally optimal. 

For ease of discussion, our theoretical analysis in Section \ref{sec:analysis} assumes access to the exact optimizer.

{\bf Computation.}
In our experiments, the optimization routine is not a major computational
bottleneck: TaS-NDPO takes 22.5 seconds on average for runs with up to
100{,}000 samples; detailed runtime comparisons are reported in
Appendix~\ref{app:runtime_comparison}.

\subsection{Sampling Rule}
\label{subsec:sampling_rule}

We use the convention that after $t$ pulls we have counts $N_x(t),N_{x,z}(t)$
(Section~\ref{subsec:empirical_model}), and we choose the next arm $X_{t+1}$
based on plug-in quantities computed from data up to time $t$.
Given the plug-in allocation $\hat w(t)$, the algorithm follows a D-tracking rule
augmented with (i) competitor coverage and (ii) cell-level forcing.

{\bf D-tracking.}
When neither coverage nor forcing is active, we apply the D-tracking rule
\citep{garivier2016optimalbestarmidentification}, i.e., 
$X_{t+1}$ is updated  by the smallest index of $\arg\max_{x\in\mathcal X}\bigl(t\,\hat w_x(t)-N_x(t)\bigr)$.

{\bf Competitor coverage.}
Let $C_x(t)$ denote the number of times arm $x$ has been selected as the
competitor up to time $t$, with initialization $C_x(0)=0$ and update
\begin{equation}
C_x(t)=C_x(t-1)+\mathbf{1}\{x'(t)=x\}.
\end{equation}
Fix $a\in(0,1)$ and define $h(t):=\lceil t^a\rceil$,
where $\lceil\cdot\rceil$ denotes the ceiling function.
After observing $t$ samples, we select the next competitor $x'(t+1)$ as follows.
If $\min_{x\in\mathcal X} C_x(t)<h(t)$, choose
\begin{equation}
x'(t+1)\in \arg\min\{C_x(t): C_x(t)<h(t)\},
\end{equation}
breaking ties by the smallest index.
Otherwise, choose $x'(t+1)$ as any competitor $x'\neq \hat x(t)$ attaining the minimum among the competitor-wise inner problems described in Section~\ref{subsec:optimization_details}, breaking ties by the smallest index.
This rule guarantees that each arm is selected as a competitor at least $h(t)-1=\Omega(t^a)$ times (for all sufficiently large $t$).

{\bf Cell-level forcing.}
Define the relevant arm set at time $t+1$ by
$\mathcal A_{\mathrm{rel}}(t+1)=\{x_0,\hat x(t),x'(t+1)\}$.
Let $g(t)$ be nondecreasing with $g(t)\to\infty$ and $g(t)=o(h(t))$.
For any arm $x$, define its minimum cell count
$m_x(t):=\min_{z\in\mathcal Z} N_{x,z}(t)$.
Define the set of deficient relevant arms (computed from time-$t$ counts)
$\mathcal D(t):=\{x\in\mathcal A_{\mathrm{rel}}(t+1): m_x(t)<g(t)\}$.
If $\mathcal D(t)\neq\emptyset$, we force exploration by choosing
$X_{t+1}\in \arg\min_{u\in\mathcal D(t)}(m_u(t),\,N_u(t))$,
i.e., we minimize $m_u(t)$ and break ties by the smallest total count $N_u(t)$
(and then by the smallest index).
Otherwise, we perform D-tracking 
breaking ties by the smallest index.

{\bf Role of coverage and forcing.}
Competitor coverage ensures each arm is compared often enough, while cell-level forcing
ensures all relevant cell counts $N_{x,z}(t)$ grow, which is needed for consistency and for the likelihood-ratio statistic in the stopping rule. Both mechanisms act on a vanishing fraction of rounds since $h(t)=t^a$ with $a\in(0,1)$ and $g(t)=o(h(t))$, so the dominant sampling dynamics are governed by D-tracking.

\begin{algorithm}[!t]
\caption{Algorithm of TaS-NDPO}
\label{alg:tas_ndpo}
\begin{algorithmic}[1]
\REQUIRE Confidence $\delta\in(0,1)$; baseline arm $x_0$; $h(t)=\lceil t^a\rceil$ for some $a\in(0,1)$; nondecreasing $g(t)$ such as $g(t)\to\infty$ and $g(t)=o(h(t))$.
\STATE Pull each arm once and update counts; set $C_x \gets 0$ for all $x\in\mathcal X$.
\FOR{$t=|\mathcal X|,|\mathcal X|+1,\dots$}
    \STATE Compute $\hat P(t)$, $\hat\theta_t(\cdot)$, $\hat x(t)\in\arg\max_{x\in\mathcal X}\hat\theta_t(x)$, and $\hat w(t)$.
    \STATE Compute $Z_t(x,\hat x(t))$ for all $x\neq \hat x(t)$.
    \IF{$\min_{x\neq \hat x(t)} Z_t(x,\hat x(t)) \ge \beta(t,\delta)$}
        \RETURN $\hat x(t)$
    \ENDIF

    \STATE \textbf{Competitor coverage:}
    \IF{$\min_{x\in\mathcal X} C_x < h(t)$}
        \STATE $x'(t{+}1)$ is updated by the smallest index of $\arg\min\{C_x:\ C_x<h(t)\}$.
    \ELSE
        \STATE $x'(t{+}1)\leftarrow$ a minimizer of the competitor-wise inner problems described in Section~\ref{subsec:optimization_details}.
    \ENDIF
    \STATE $C_{x'(t+1)} \gets C_{x'(t+1)} + 1$,\\
$\mathcal A_{\mathrm{rel}} \gets \{x_0,\hat x(t),x'(t+1)\}$,\\
 $\mathcal D \gets \{u\in\mathcal A_{\mathrm{rel}}:\ \min_{z\in\mathcal Z}N_{u,z}(t)<g(t)\}$.

    \STATE \textbf{Arm selection:}
    \IF{$\mathcal D\neq\emptyset$}
        \STATE $X_{t+1}$ is updated by  the smallest index of $\arg\min_{u\in\mathcal D}(\min_{z}N_{u,z}(t),\,N_u(t)).$
    \ELSE
        \STATE $X_{t+1}$ is updated by the smallest index of $\arg\max_{x\in\mathcal X}(t\hat w_x(t)-N_x(t)).$
    \ENDIF

    \STATE Pull $X_{t+1}$; observe $(Z_{t+1},Y_{t+1})$; update counts.
\ENDFOR
\RETURN $\hat x(t)$
\end{algorithmic}
\end{algorithm}

\subsection{Stopping rule}
\label{subsec:stopping_rule}

The stopping rule should guarantee $\delta$-correctness while stopping as soon as the data exclude every model under which a suboptimal-arm could be optimal.
As in classical TaS, we use a likelihood-based test against the
closest alternative model.

{\bf Likelihood function.}
For any $Q\in\mathcal P$, define the negative log-likelihood ratio
$\mathcal L_t(Q)
:=
\sum_{a\in\mathcal X}
N_a(t)\,
\mathrm{KL}(\hat P_a(t)\Vert Q_a)$,
where $\hat P_a(t)$ denotes the empirical joint distribution of $(Z,Y)$
under arm $a$.

{\bf GLRT statistic.}
At time $t$, for each competitor $x\neq \hat x(t)$,
define the generalized likelihood ratio statistic (GLRT)
\begin{equation}
\label{eq:glrt_stat}
Z_t(x,\hat x(t))
:=
\inf_{Q\in\mathcal P:\,\theta_Q(x)\ge\theta_Q(\hat x(t))}
\mathcal L_t(Q),
\end{equation}
where
$\theta_Q(x)=\sum_{z\in\mathcal Z}
Q_x(1| z)\,Q_{x_0}(z)$.

The statistic $Z_t(x,\hat x(t))$ measures the minimal empirical
KL divergence to a model under which arm $x$
is at least as good as the current empirical best arm $\hat x(t)$.

{\bf Stopping condition.}
Let $\beta(t,\delta)$ be a nondecreasing threshold.
Define the stopping time
\begin{equation}
\label{eq:stopping_time}
\tau_\delta
:=
\inf
\{
t\ge 1:\;
\min\nolimits_{x\neq \hat x(t)}
Z_t(x,\hat x(t))
\;\ge\;
\beta(t,\delta)
\},
\end{equation}
and output $\hat x(\tau_\delta)$.
A concrete choice of $\beta(t,\delta)$ ensuring $\delta$-correctness
is provided in Appendix~\ref{app:stopping}.

\section{Theoretical Guarantees}
\label{sec:analysis}

In this section, we establish
$\delta$-correctness and asymptotic optimality of
Algorithm~\ref{alg:tas_ndpo}.
We first assume 
\begin{assumption}[Uniform interior (KL-regularity)]
\label{ass:uniform_interior}
There exists $\varepsilon\in(0,1/2)$ such that for every model $R\in\mathcal P$,
every arm $x\in\mathcal X$, and every mediator value $z\in\mathcal Z$,
\begin{equation}
R_x(z)\ge \varepsilon,
R_x(1| z)\in[\varepsilon,1-\varepsilon].
\end{equation}
\end{assumption}

This assumption guarantees that all KL divergences in the
characteristic-time optimization are finite and continuous.
It ensures that the relevant probability vectors remain in the interior
of the simplex, which is needed for the continuity and compactness
arguments in the asymptotic analysis.

Then, our algorithm has $\delta$-correctness.
\begin{restatable}{theorem}{Theoremthree}[$\delta$-correctness]
\label{thm:delta_correct}
Assume $\beta(t,\delta)$ is chosen as in Appendix~\ref{app:stopping}.
For any $\delta\in(0,1)$ and any $P\in\mathcal P$,
Algorithm~\ref{alg:tas_ndpo} satisfies
\begin{equation}
\mathbb{P}_P\!\left(
\tau_\delta < \infty, \hat x(\tau_\delta) \neq x^\star(P)
\right)
\le \delta.
\end{equation}
\end{restatable}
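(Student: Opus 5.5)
The plan is the standard GLRT argument of \citet{garivier2016optimalbestarmidentification}, adapted to the cell-level structure of $\mathcal L_t$.

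\textbf{Step 1: reduce error to a deviation event.} Fix $P$ with $x^\star=x^\star(P)$. On $\{\tau_\delta<\infty,\ \hat x(\tau_\delta)\neq x^\star\}$, write $t=\tau_\delta$ and $\hat x=\hat x(t)\neq x^\star$. The stopping condition gives $Z_t(x^\star,\hat x)\ge\beta(t,\delta)$, since $x^\star$ is one of the competitors $x\neq\hat x(t)$. The true model $P$ satisfies $\theta_P(x^\star)\ge\theta_P(\hat x)$, so $P$ is feasible in the infimum \eqref{eq:glrt_stat} defining $Z_t(x^\star,\hat x)$. Hence
\[
\mathcal L_t(P)=\sum_{a}N_a(t)\,\mathrm{KL}(\hat P_a(t)\Vert P_a)\ \ge\ Z_t(x^\star,\hat x)\ \ge\ \beta(t,\delta).
\]
Consequently the error event is contained in $\{\exists t\ge1:\ \sum_a N_a(t)\mathrm{KL}(\hat P_a(t)\Vert P_a)\ge\beta(t,\delta)\}$. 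Note this containment does not use exactness of any optimizer in the sampling rule, only the definition of the statistic.

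\textbf{Step 2: decompose into cells.} By \eqref{eq:kl_decomp},
\[
N_a(t)\,\mathrm{KL}(\hat P_a(t)\Vert P_a)=N_a(t)\,\mathrm{KL}(\hat P_a(Z;t)\Vert P_a(Z))+\sum_z N_{a,z}(t)\,\mathrm{kl}\big(\hat P_a(1|z;t),P_a(1|z)\big).
\]
This expresses $\mathcal L_t(P)$ as a sum of at most $K$ categorical self-normalized deviations, each over an $M$-letter alphabet, and $KM$ Bernoulli deviations. Each deviation is indexed by its own random count, and the arm choices and cell counts are adapted to the filtration.

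\textbf{Step 3: uniform concentration.} I would invoke a time-uniform concentration inequality for sums of empirical KL divergences, for instance the mixture-martingale bound of Kaufmann--Koolen for exponential families. A categorical distribution on $M$ values is an $(M-1)$-dimensional exponential family, so the total parameter dimension is $d=K(M-1)+KM$. The bound would take the form
\[
\mathbb P\Big(\exists t:\ \mathcal L_t(P)\ge \sum_{j} 3\log(1+\log n_j(t))+d\,\mathcal T(\log(1/\delta)/d)\Big)\le\delta,
\]
where the $n_j(t)$ are the counts $N_a(t)$ and $N_{a,z}(t)$, each bounded by $t$. Alternatively, one can use the crude but simple choice $\beta(t,\delta)=\log(Ct^{\alpha}/\delta)$ together with a union bound over $t$ and a method-of-types bound $\mathbb P(N\,\mathrm{KL}\ge u)\le e^{-u}(\dots)\,\mathrm{poly}(N)$ for each count value. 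I take $\beta$ in Appendix~\ref{app:stopping} to be exactly this right-hand side, with $n_j(t)$ replaced by $t$, which is monotone.

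\textbf{Main obstacle.} The hard step is Step 3: the counts $N_{a,z}(t)$ are random and depend adaptively on past observations through forcing and tracking, so the fixed-sample Chernoff bounds do not apply directly. I would handle this by a reindexing argument. The conditional outcome observations in cell $(a,z)$ form an i.i.d.\ sequence indexed by their own arrival order, and likewise for the mediators under arm $a$. The product of mixture likelihood-ratio martingales over all cells, each indexed by its own count, is then a nonnegative supermartingale, and Ville's inequality gives the uniform-in-time bound.

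Assumption~\ref{ass:uniform_interior} is not needed for correctness. It only ensures that the statistics are finite, and a cell with zero count simply contributes zero to $\mathcal L_t(P)$.
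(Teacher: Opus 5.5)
Your proof is correct and follows the paper's skeleton. The paper also reduces the error event to a deviation $\mathcal L_t(P)\ge\beta(t,\delta)$ of the true model, phrased as $P$ staying in the confidence set $\mathcal U_t=\{Q:\mathcal L_t(Q)\le\beta(t,\delta)\}$ for all $t$, and then applies a time-uniform concentration bound. Your direct contrapositive version is slightly cleaner than the paper's:
\begin{itemize}
\item it does not need Assumption~\ref{ass:unique_opt}, which the paper invokes to conclude $\hat x(\tau_\delta)=x^\star(P)$;
\item it avoids the paper's step ``$\mathcal L_t(Q)\ge\beta\Rightarrow Q\notin\mathcal U_t$'', which is harmlessly off in the boundary case $\mathcal L_t(Q)=\beta$.
\end{itemize}

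The difference is in the concentration step. The threshold of Appendix~\ref{app:stopping} is not a mixture-martingale threshold. It is $\beta(t,\delta)=\log(\pi^2t^{2+2KM_Z}/\delta)+6M_ZK\log(t+1)$, and it is justified by exactly your fallback route. Each arm's pair $(Z,Y)$ is treated as one categorical on $2M_Z$ letters. The method-of-types bound (Lemma~\ref{lem:alphabet_conc_comb}) is applied for each fixed configuration of counts. Adaptivity is absorbed by a union bound over the polynomially many count configurations at time $t$, and over $t$ with weights $1/(\pi^2t^2)$.

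Since the theorem fixes this $\beta$, that is the route you must commit to; your remark that you ``take $\beta$ in Appendix~\ref{app:stopping} to be'' the mixture threshold misidentifies it. The Kaufmann--Koolen-type threshold is not uniformly comparable with the paper's. It is smaller for large $t$, but larger for fixed $t$ as $\delta\to0$ because of its $O(d\log\log(1/\delta))$ overhead. So $\delta$-correctness for one does not transfer to the other. On the union-bound route, your cell decomposition and the reindexing/Ville argument are unnecessary.

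What your primary route would buy is a much tighter threshold, roughly $\log(1/\delta)+O(\log\log t)$ instead of $\log(1/\delta)+O(KM_Z\log t)$, and hence earlier stopping at fixed $\delta$. The paper's route buys an elementary proof that uses only the method of types.
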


For the asymptotic optimality result, we also assume uniqueness of the
characteristic allocation.

\begin{assumption}[Unique characteristic allocation]
\label{ass:unique_char_alloc}
The maximizer $w^\star(P)$ of the characteristic optimization problem is unique.
\end{assumption}

We show the following lemmas:

\begin{restatable}{lemma}{Lemmaone}
\label{lem:all_cells_diverge}
Under Assumption~\ref{ass:uniform_interior}.
Let $h(t)=\lceil t^a\rceil$ for some $a\in(0,1)$ and let $g(t)$ be nondecreasing
with $g(t)\to\infty$ and $g(t)=o(h(t))$.
Under the sampling rule of Section~\ref{subsec:sampling_rule},
\begin{equation}
N_x(t)\to\infty, N_{x,z}(t)\to\infty,
\forall x\in\mathcal X,\ z\in\mathcal Z, \text{a.s.}
\end{equation}
\end{restatable}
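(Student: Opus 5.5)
The proof has two parts. First, every arm is pulled infinitely often. Second, once $N_x(t)\to\infty$, every cell count diverges by a strong law / Borel–Cantelli argument using $P_x(z)\ge\varepsilon$ from Assumption~\ref{ass:uniform_interior}.

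\textbf{Step 1: Cell divergence given arm divergence.} Fix $x$ and $z$. Let $\tau_k$ be the round of the $k$-th pull of $x$. The mediators $Z_{\tau_k}$ are i.i.d.\ with law $P_x(Z)$; this follows from the standard stack-of-rewards construction, since $X_{t+1}$ is measurable with respect to the past. Hence, on $\{N_x(t)\to\infty\}$, the strong law gives
\[
N_{x,z}(t)/N_x(t)\to P_x(z)\ge\varepsilon>0 \quad \text{a.s.},
\]
so $N_{x,z}(t)\to\infty$ a.s. It therefore suffices to prove $N_x(t)\to\infty$ a.s.\ for each $x$.

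\textbf{Step 2: Every arm is pulled infinitely often.} The competitor coverage rule forces each $x$ to be chosen as $x'(t+1)$ at least $h(t)-1\to\infty$ times, so each $x$ lies in $\mathcal A_{\mathrm{rel}}(t+1)$ infinitely often. Suppose, for contradiction, that on an event of positive probability some arm $x$ satisfies $N_x(t)\le n$ for all $t$. Then $m_x(t)\le n$. Because $g(t)\to\infty$, for all large $t$ we have $m_x(t)<g(t)$, so $x\in\mathcal D(t)$ whenever $x\in\mathcal A_{\mathrm{rel}}(t+1)$. At such times the forcing step pulls an arm $u\in\mathcal D(t)$ minimizing $(m_u(t),N_u(t))$.

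We need to exclude the possibility that $x$ is always beaten by another deficient relevant arm. Any arm $u$ that keeps winning the forcing step is pulled, so its cell counts grow. More precisely, $N_u$ grows, and by Step 1 applied to $u$, $m_u$ grows along those pulls. Eventually $m_u(t)>n\ge m_x(t)$, and then $u$ can no longer beat $x$ in the lexicographic $\arg\min$.

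To make this rigorous, consider the finite set $B$ of arms with bounded $N$ and the set $B^c$ of arms with $N\to\infty$. For $u\in B^c$, $m_u(t)\to\infty$ by Step 1, so after some random time $T_0$ we have $m_u(t)>n$ for all $u\in B^c$. After $T_0$, whenever $x\in\mathcal A_{\mathrm{rel}}(t+1)$, the forcing step selects an arm in $B$ (either $x$ itself or another element of $B$). There are infinitely many such rounds, so some arm of $B$ is pulled infinitely often, which contradicts the definition of $B$. Hence $B=\emptyset$ a.s., and every $N_x(t)\to\infty$.

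\textbf{Main obstacle.} The delicate point is Step 2's interplay between arms: ensuring that an arm with bounded counts can be the forced choice infinitely often. The partition into bounded and divergent arms handles this cleanly. It does require that the $\liminf$ of $m_u$ for divergent arms is infinite, not merely the $\limsup$. This holds because cell counts are nondecreasing, so once $m_u$ exceeds $n$ it stays above $n$.

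A secondary technical point is justifying Step 1 under adaptive sampling. The i.i.d.\ reward-table construction (or a martingale SLLN, using that the conditional probability of $Z=z$ given a pull of $x$ is $P_x(z)$) is the cleanest route. A further subtlety is that Step 2 invokes Step 1 only for arms already known to be in $B^c$, so the argument is not circular.
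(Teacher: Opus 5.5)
Your proof is correct and follows essentially the same route as the paper: competitor coverage puts each arm in $\mathcal A_{\mathrm{rel}}$ infinitely often, bounded counts make the arm deficient because $g(t)\to\infty$, the lexicographic forcing rule then gives a contradiction, and cell divergence follows from the strong law with $P_x(z)\ge\varepsilon$. Your partition into bounded and divergent arms finishes the contradiction a little more cleanly than the paper's choice of a single arm $u^\star$ selected infinitely often on the forcing rounds, because it explicitly rules out another bounded-count arm winning the $\arg\min$ in place of $x$.
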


Since $N_{x,z}(t)\to\infty$ for all $x,z$,
the strong law yields
\begin{equation}
\hat P_x(z;t)\to P_x(z),
\hat P_x(1| z;t)\to P_x(1| z)
\text{ a.s.}
\end{equation}
Hence $\hat P(t)\to P$ almost surely.

\begin{restatable}{lemma}{Lemmatwo}[Consistency of plug-in allocation]
\label{lem:plugin_consistency}
Under Assumption~\ref{ass:unique_opt}, if $\hat P(t)\to P$ almost surely,
then
$\hat w(t)\to w^\star(P)$ almost surely.
\end{restatable}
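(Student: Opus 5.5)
The plan is the standard Berge-maximum-theorem argument of \citet{garivier2016optimalbestarmidentification}, adapted to the non-separable alternative set. Define, for $R\in\mathcal P$ with unique maximizer $x^\star(R)$ and $w\in\Sigma_K$,
\[
F(w,R):=\inf_{Q\in\mathrm{Alt}(R)}\sum_{x\in\mathcal X} w_x\,\mathrm{KL}(R_x\Vert Q_x)
=\min_{x'\neq x^\star(R)}\ \inf_{Q\in\mathcal P:\,\theta_Q(x')\ge\theta_Q(x^\star(R))}\sum_{x} w_x\,\mathrm{KL}(R_x\Vert Q_x).
\]
Then $\hat w(t)\in\arg\max_{w\in\Sigma_K}F(w,\hat P(t))$. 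The claim will follow from three facts:
\begin{itemize}
\item $F$ is jointly continuous at $\Sigma_K\times\{P\}$.
\item $\Sigma_K$ is compact and does not depend on $R$.
\item The maximizer $w^\star(P)$ is unique; this is Assumption~\ref{ass:unique_char_alloc}, which we invoke alongside Assumption~\ref{ass:unique_opt}.
\end{itemize}
Given these, Berge's maximum theorem makes the argmax correspondence upper hemicontinuous at $P$, hence single-valued and continuous there. Applying this pathwise on the probability-one event where $\hat P(t)\to P$ gives $\hat w(t)\to w^\star(P)$.

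\textbf{Step 1: the alternative set freezes.} Since $x^\star(P)$ is unique (Assumption~\ref{ass:unique_opt}), there is a gap
\[
\Delta:=\theta_P(x^\star(P))-\max_{x\neq x^\star(P)}\theta_P(x)>0.
\]
The map $R\mapsto\theta_R$ is continuous because it is bilinear in $(R_x(1|z),R_{x_0}(z))$. So for $t$ large enough on the convergence event, $\hat x(t)=x^\star(P)$. From then on, $\mathrm{Alt}(\hat P(t))$ is the fixed union of the $K-1$ closed sets $\{Q:\theta_Q(x')\ge\theta_Q(x^\star(P))\}$, which do not depend on $t$. The varying alternative set is therefore removed, and only the dependence of the objective on $R$ remains.

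\textbf{Step 2: joint continuity of $F$.} I would use Assumption~\ref{ass:uniform_interior} to restrict $\mathcal P$ to models whose coordinates lie in $[\varepsilon,1-\varepsilon]$ (and $R_x(z)\ge\varepsilon$). This set is compact. On it, $(R,Q)\mapsto \mathrm{KL}(R_x\Vert Q_x)$ is finite and jointly continuous, hence uniformly continuous. For each fixed competitor $x'$, the constraint set $\mathcal C_{x'}$ is compact and independent of $R$. The function
\[
(w,R)\mapsto\min_{Q\in\mathcal C_{x'}}\sum_x w_x\mathrm{KL}(R_x\Vert Q_x)
\]
is then continuous, since a minimum over a fixed compact set of a jointly continuous function is continuous. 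A finite minimum over $x'$ preserves continuity. The empirical $\hat P(t)$ may violate the $\varepsilon$-interior condition at finite $t$, but it converges to $P$, which lies in the interior. So eventually $\hat P(t)$ lies in a compact neighbourhood of $P$ on which the KL terms are still finite and continuous, and the argument above applies there.

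\textbf{Step 3: conclusion and main obstacle.} The main obstacle is Step 2, and specifically the compactness and regularity of the inner feasible set. Without Assumption~\ref{ass:uniform_interior}, the infimum over $Q$ could escape to the boundary, where the KL divergence blows up or is discontinuous. That would destroy continuity of $F$ in $R$.

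Once continuity holds, the conclusion is a short contradiction argument, which avoids citing Berge explicitly. Suppose some subsequence $\hat w(t_k)\to\bar w\neq w^\star(P)$. Continuity of $F$ gives
\[
F(\bar w,P)=\lim_k F(\hat w(t_k),\hat P(t_k))\ \ge\ \lim_k F(w^\star(P),\hat P(t_k))=F(w^\star(P),P),
\]
where the inequality uses that each $\hat w(t_k)$ maximizes $F(\cdot,\hat P(t_k))$. Hence $\bar w$ is also a maximizer of $F(\cdot,P)$, contradicting uniqueness. Compactness of $\Sigma_K$ guarantees that such a subsequence exists whenever $\hat w(t)\not\to w^\star(P)$, which completes the proof.
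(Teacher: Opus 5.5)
Your proposal is correct and follows essentially the same route as the paper. You freeze $\mathrm{Alt}(\hat P(t))$ once $\hat x(t)=x^\star(P)$, get joint continuity of the value function on the resulting fixed compact alternative set from Assumption~\ref{ass:uniform_interior}, and conclude by compactness of $\Sigma_K$ plus a subsequence argument using uniqueness of $w^\star(P)$. You are also right to invoke Assumption~\ref{ass:unique_char_alloc} explicitly (the paper's proof uses it although the lemma statement omits it), and your remark that the KL terms stay finite and continuous even when $\hat P(t)$ leaves the $\varepsilon$-interior, since only $Q$ must be bounded away from the boundary, addresses a point the paper passes over.
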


\begin{restatable}{lemma}{Lemmathree}[Sublinearity of forced rounds]
\label{lem:forced_sublinear}
Let $F(t)$ denote the number of rounds up to time $t$ in which either
competitor coverage or cell-level forcing overrides the D-tracking update.
Then
$F(t)=O(h(t)) + O(g(t))$.
In particular, if $h(t)=t^a$ with $a\in(0,1)$ and $g(t)=o(h(t))$,
then $F(t)=o(t)$ almost surely.
\end{restatable}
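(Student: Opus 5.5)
I will bound the two kinds of overriding rounds separately and write $F(t)\le F_{\mathrm{cov}}(t)+F_{\mathrm{cell}}(t)$. Here $F_{\mathrm{cov}}(t)$ counts rounds in which competitor coverage dictates the competitor choice. $F_{\mathrm{cell}}(t)$ counts rounds $s\le t$ with $\mathcal D(s-1)\neq\emptyset$, i.e.\ rounds where cell-level forcing overrides D-tracking. Competitor coverage only changes which competitor $x'(s)$ enters $\mathcal A_{\mathrm{rel}}(s)$, so it can override the arm pulled only through the forcing branch. The first step is therefore to observe that every overriding round is a forcing round. The coverage count is still worth bounding, because it controls how often the relevant set can be steered toward a deficient arm.

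\textbf{Coverage rounds.} Coverage is active at round $s$ only if $\min_x C_x(s-1)<h(s-1)$, and each such round increments one counter $C_x$ with $C_x<h(s-1)\le h(t)$. No counter incremented this way can exceed $h(t)$, so the total number of coverage rounds up to $t$ is at most $\sum_x h(t)=K\lceil t^a\rceil=O(h(t))$. This count is deterministic.

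\textbf{Forcing rounds.} Each forcing round pulls an arm $u$ with $m_u(s-1)<g(s-1)\le g(t)$, using that $g$ is nondecreasing. I would charge each such round to the deficient arm $u$ and show that, for each fixed arm $u$, the number of forcing pulls of $u$ while $m_u<g(t)$ is $O(g(t))$ almost surely. Write $z_u:=\arg\min_z N_{u,z}$ for the deficient cell. Each pull of $u$ increments $N_{u,z_u}$ with probability $P_u(z_u)\ge\varepsilon$ by Assumption~\ref{ass:uniform_interior}. The number of pulls of $u$ needed to raise every $N_{u,z}$ above $g(t)$ is then a sum of geometric waiting times with mean at most $Mg(t)/\varepsilon$. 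A strong law or martingale concentration argument, applied to the Bernoulli indicators $\mathbf 1\{Z_s=z\}$ along the subsequence of pulls of $u$, gives that this count is at most $2Mg(t)/\varepsilon$ for all large $t$ almost surely. Equivalently, $N_{u,z}\ge \varepsilon N_u/2$ eventually, so $m_u\ge g(t)$ once $N_u\ge 2g(t)/\varepsilon$. Since $m_u$ never decreases, after those pulls $u$ is never deficient again before time $t$. Summing over the $K$ arms gives $F_{\mathrm{cell}}(t)=O(g(t))$ almost surely.

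\textbf{Main obstacle and conclusion.} The delicate step is the almost-sure uniformity in $t$, because the pulls of $u$ are chosen adaptively. I would handle it with the fact that, conditionally on the pull sequence, the $Z$ draws for arm $u$ are i.i.d.\ from $P_u$ (the standard stack-of-rewards construction). This makes $N_{u,z}(t)/N_u(t)\to P_u(z)$ on the event $N_u\to\infty$. On the complementary event the count of pulls of $u$ is bounded anyway, so it contributes $O(1)$. Combining the two bounds gives $F(t)=O(h(t))+O(g(t))$. With $h(t)=\lceil t^a\rceil$, $a<1$, and $g=o(h)$, this yields $F(t)=O(t^a)=o(t)$ almost surely.
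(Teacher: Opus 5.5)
Your proposal is correct and follows essentially the same route as the paper. Coverage rounds get a deterministic bound of $K\,h(t)$ from the counter argument, forced pulls of each arm get an $O(g(t))$ bound plus a random constant because $m_u \ge \tfrac{\varepsilon}{2}N_u$ eventually (strong law along that arm's stack of mediator draws), and these are summed over the $K$ arms. Your extra observations are harmless refinements the paper leaves implicit: coverage can only override D-tracking through the forcing branch, and the event where $N_u$ stays bounded contributes only $O(1)$.
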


\begin{restatable}{lemma}{Lemmafour}[Tracking of sampling proportions]
\label{lem:tracking_final}
Under Assumptions~\ref{ass:uniform_interior} and~\ref{ass:unique_char_alloc},
the sampling proportions satisfy
$N_x(t)/t
\to w_x^\star(P)$  for all $x\in\mathcal X$ almost surely.
\end{restatable}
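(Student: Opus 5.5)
The argument is the standard D-tracking analysis of \citet{garivier2016optimalbestarmidentification}, modified to account for the rounds overridden by coverage and forcing. First, by Lemma~\ref{lem:all_cells_diverge} and the strong law, $\hat P(t)\to P$ almost surely. Lemma~\ref{lem:plugin_consistency} then gives $\hat w(t)\to w^\star(P)$ almost surely; uniqueness of the maximizer (Assumption~\ref{ass:unique_char_alloc}) is what makes the limit well defined. Assumption~\ref{ass:uniform_interior} keeps all KL terms continuous, which the argmax-continuity argument inside that lemma needs. Fix a sample path in this probability-one event, intersected with the event of Lemma~\ref{lem:forced_sublinear}, on which $F(t)=o(t)$. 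For any $\eta>0$ there is $t_0$ such that $\|\hat w(t)-w^\star\|_\infty\le\eta$ for all $t\ge t_0$.

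Next comes the tracking step, an upper bound on each arm's count. Consider any D-tracking round $s\ge t_0$ in which arm $x$ is pulled. Because $x$ maximizes $s\hat w_y(s)-N_y(s)$ and $\sum_y (s\hat w_y(s)-N_y(s))=0$, we get $N_x(s)\le s\hat w_x(s)\le s(w^\star_x+\eta)$. Now let $s_x(t)$ be the last D-tracking round $\le t$ at which $x$ was pulled. Every pull of $x$ after $s_x(t)$ is a forced round, so
\begin{equation*}
N_x(t)\le N_x(s_x(t))+1+F(t)\le t(w^\star_x+\eta)+1+F(t)+t_0 .
\end{equation*}
Dividing by $t$ and using $F(t)=o(t)$ yields $\limsup_t N_x(t)/t\le w^\star_x+\eta$. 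Since $\eta$ is arbitrary, $\limsup_t N_x(t)/t\le w^\star_x$ for every $x$.

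The lower bound follows by complementation. Since $\sum_x N_x(t)=t$ and $\sum_x w^\star_x=1$, we have
\begin{equation*}
\frac{N_x(t)}{t}=1-\sum_{y\ne x}\frac{N_y(t)}{t}.
\end{equation*}
Taking $\liminf$ on the left and applying the per-arm $\limsup$ bounds to the finitely many other arms gives $\liminf_t N_x(t)/t\ge 1-\sum_{y\neq x}w^\star_y=w^\star_x$. Together with the upper bound, this proves the claim.

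The main obstacle is the tracking step. In \citet{garivier2016optimalbestarmidentification} the forced exploration is a $\sqrt t$-rule built into the tracking itself, whereas here the overrides come from two separate mechanisms, and these may repeatedly pull arms whose target weight is small. The point to verify carefully is that an override can only increase $N_x$ by the number of forced rounds, so it is absorbed into $F(t)$. It does not break the invariant $N_x(s)\le s\hat w_x(s)$ at D-tracking rounds, because that invariant is derived afresh at each D-tracking round from the argmax property and does not depend on history. Arms with $w^\star_x=0$ cause no trouble here: the upper bound argument shows $N_x(t)/t\to0$ for them while allowing $N_x(t)\to\infty$, which is consistent with Lemma~\ref{lem:all_cells_diverge}.
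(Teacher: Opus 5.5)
Your proof is correct, and it takes a more direct route than the paper.

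The paper reindexes time along the non-forced decision times $\tau_1<\tau_2<\cdots$. It writes the D-tracking index there as $k\hat w_x(\tau_k)-\widetilde N_x(k)+R_{k,x}$ with $\sup_x|R_{k,x}|=o(k)$. It then cites the deterministic D-tracking stability argument of \citet{garivier2016optimalbestarmidentification} to get $\widetilde N_x(k)/k\to w^\star_x(P)$, and lifts back to real time using $n(t)/t\to 1$ and $N^{\mathrm F}_x(t)\le F(t)$.

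You stay in real time throughout:
\begin{itemize}
\item The zero-sum identity $\sum_y\bigl(s\hat w_y(s)-N_y(s)\bigr)=0$ gives $N_x(s)\le s\hat w_x(s)$ at every D-tracking round, regardless of how many overrides came before.
\item Overrides enter only additively, through $F(t)$.
\item Complementation over the finitely many arms supplies the $\liminf$.
\end{itemize}

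Your version is self-contained. The paper's key step applies a result proved for exact D-tracking (with its own $\sqrt t$ forced exploration) to an $o(k)$-perturbed index, and leaves implicit that the argument tolerates the perturbation. Your argument is essentially the Garivier--Kaufmann proof run directly on the actual counts. It makes that robustness explicit and avoids the reindexing and the $R_{k,x}$ bookkeeping.

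The paper's decomposition is more modular: it separates the forced rounds from a comparison of the non-forced dynamics with an ideal tracking rule. For this lemma that extra structure is not needed.
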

This follows from Lemmas~\ref{lem:plugin_consistency} and \ref{lem:forced_sublinear}.

Let $T^\star(P)$ denote the characteristic time in Theorem~\ref{thm:nde_lower_bound}.

\begin{restatable}{lemma}{Lemmafive}[Asymptotic growth of the GLRT] \label{lem:glrt_growth} 
For every competitor $x\neq x^\star(P)$, \begin{equation} 
\begin{aligned}
&\mathbb P_P\Big(
\liminf_{t\to\infty} t^{-1} Z_t(x,\hat x(t))
\;\ge\;\\
&\inf_{\substack{Q\in\mathcal P:
\theta_Q(x)\ge \theta_Q(x^\star(P))}}
\sum_{a\in\mathcal X} w_a^\star(P)\,
\mathrm{KL}(P_a\Vert Q_a)
\Big)=1.
\end{aligned} 
\end{equation} 

Consequently, 
\begin{equation}
\mathbb P_P\!\left(
\liminf_{t\to\infty}
t^{-1}
\min_{x\neq x^\star(P)} Z_t(x,\hat x(t))
\;\ge\;
{T^\star(P)}^{-1}
\right)=1.
\end{equation}
\end{restatable}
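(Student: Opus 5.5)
We work on the almost-sure event on which $\hat P(t)\to P$ (Lemma~\ref{lem:all_cells_diverge} and the strong law) and $N_a(t)/t\to w_a^\star(P)$ for every $a\in\mathcal X$ (Lemma~\ref{lem:tracking_final}), and prove the $\liminf$ inequality deterministically on this event. First, Assumption~\ref{ass:unique_opt} and $\hat\theta_t\to\theta_P$ give a finite random time $t_0$ with $\hat x(t)=x^\star(P)$ for all $t\ge t_0$. For $t\ge t_0$ the statistic therefore has the fixed form
\begin{equation*}
t^{-1}Z_t(x,x^\star(P))=\inf_{Q\in\mathcal A_x}\sum_{a\in\mathcal X}\frac{N_a(t)}{t}\,\mathrm{KL}(\hat P_a(t)\Vert Q_a),
\end{equation*}
where $\mathcal A_x:=\{Q\in\mathcal P:\theta_Q(x)\ge\theta_Q(x^\star(P))\}$. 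We write $F(w,R):=\inf_{Q\in\mathcal A_x}\sum_a w_a\mathrm{KL}(R_a\Vert Q_a)$. The claim then reduces to showing that $F$ is lower semicontinuous at $(w^\star(P),P)$, since $(N(t)/t,\hat P(t))\to(w^\star(P),P)$.

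To prove this, we use the compactness supplied by Assumption~\ref{ass:uniform_interior}. All models have $Q_x(z)\ge\varepsilon$ and $Q_x(1|z)\in[\varepsilon,1-\varepsilon]$, so $\mathcal P$, and hence $\mathcal A_x$, can be identified with a compact subset of a Euclidean space; $\mathcal A_x$ is closed because $\theta_Q$ is continuous in $Q$. The map $(w,R,Q)\mapsto\sum_a w_a\mathrm{KL}(R_a\Vert Q_a)$ is jointly continuous on $\Sigma_K\times\mathcal P\times\mathcal P$, because all probabilities are bounded below by $\varepsilon^2$ (for the joint cell masses), so the logarithms are bounded and continuous. An infimum of a jointly continuous function over a fixed compact set is continuous in the parameters, by a Berge maximum theorem argument or directly by uniform continuity on the compact product. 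Hence $F(N(t)/t,\hat P(t))\to F(w^\star(P),P)$, which gives the first display, in fact as a limit. The empirical $\hat P(t)$ may leave the $\varepsilon$-interior at finite times, but it converges to $P$, which lies in the interior. For large $t$ it is therefore in a compact neighborhood on which KL remains continuous, because $Q$, not $\hat P$, appears in the denominator, and $0\log 0$ is harmless.

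For the consequence, we take the minimum over the finitely many $x\neq x^\star(P)$. A $\liminf$ of a minimum of finitely many sequences equals the minimum of their $\liminf$s, so the result is at least $\min_{x\neq x^\star}F_x(w^\star,P)=\inf_{Q\in\mathrm{Alt}(P)}\sum_a w^\star_a\mathrm{KL}(P_a\Vert Q_a)$, using the union decomposition of $\mathrm{Alt}(P)$ over competitors. By the definition of $w^\star(P)$ as the maximizer in \eqref{eq:char_time}, this equals $T^\star(P)^{-1}$. One subtlety is that $\mathrm{Alt}(P)$ uses $x^\star(Q)\neq x^\star(P)$ while $\mathcal A_x$ uses a weak inequality. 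The two infima nevertheless coincide, because ties can be approximated by strict improvements with arbitrarily small KL change.

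The main obstacle is the continuity step: ensuring that the inner infimum is over a compact set independent of $t$, and that KL is jointly continuous there including at empirical points on the boundary. Assumption~\ref{ass:uniform_interior} restricted to $Q$ handles the first issue. For the second, we would restrict to large $t$, where $\hat P(t)$ lies in a neighborhood of $P$.
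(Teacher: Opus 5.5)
Your proposal is correct and follows essentially the paper's route: you work on the almost-sure event where $\hat P(t)\to P$, $N_a(t)/t\to w^\star_a(P)$ and $\hat x(t)$ eventually equals $x^\star(P)$, and then use compactness of the constraint set together with continuity of KL under Assumption~\ref{ass:uniform_interior}, finishing with a minimum over competitors. The paper proves the lower-semicontinuity step by hand, taking near-minimizers $Q_n$, extracting a convergent subsequence and passing the constraint to the limit, where you cite Berge. Your remarks on $\hat P(t)$ leaving the $\varepsilon$-interior and on ties in $\mathrm{Alt}(P)$ address points the paper glosses over.
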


Then, our algorithm has asymptotic optimality.
\begin{restatable}{theorem}{Theoremfive}[Almost sure asymptotic optimality]
\label{thm:asympt_opt_as}
Fix a true model $P \in \mathcal P$ satisfying
Assumptions~\ref{ass:unique_opt}, ~\ref{ass:uniform_interior} and ~\ref{ass:unique_char_alloc}.
Then Algorithm~\ref{alg:tas_ndpo} satisfies
\begin{equation}
\mathbb P_P\!\left(
\limsup_{\delta\to0}
\frac{\tau_\delta}{\mathrm{kl}(\delta,1-\delta)}
\le
T^\star(P)
\right)=1.
\end{equation}
\end{restatable}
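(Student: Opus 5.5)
The plan follows the standard Track-and-Stop almost-sure argument of \citet{garivier2016optimalbestarmidentification}. We combine the GLRT growth rate in Lemma~\ref{lem:glrt_growth} with the asymptotic form of the threshold $\beta(t,\delta)$ from Appendix~\ref{app:stopping}.

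First, we work on the almost-sure event $\mathcal E$ on which three things hold. By Lemma~\ref{lem:all_cells_diverge} and the strong law, $\hat P(t)\to P$. Then, using Assumption~\ref{ass:unique_opt}, continuity of $\theta_Q$ and Assumption~\ref{ass:uniform_interior}, we have $\hat x(t)=x^\star(P)$ for all $t\ge t_0(\omega)$. Finally, the conclusion of Lemma~\ref{lem:glrt_growth} holds. Since $\hat x(t)=x^\star(P)$ eventually, the set of competitors $\{x\ne\hat x(t)\}$ coincides with $\{x\ne x^\star(P)\}$. Hence
\[
\liminf_{t\to\infty} t^{-1}\min_{x\neq\hat x(t)}Z_t(x,\hat x(t))\ge T^\star(P)^{-1}
\]
on $\mathcal E$. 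Fix $\epsilon>0$. There is then a random $t_1(\omega)\ge t_0(\omega)$, independent of $\delta$, such that $\min_{x\ne\hat x(t)} Z_t(x,\hat x(t))\ge t\,(1-\epsilon)/T^\star(P)$ for all $t\ge t_1$.

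Second, we use the threshold. We need $\beta(t,\delta)$ to satisfy $\beta(t,\delta)\le \log(Ct^{\alpha}/\delta)$ for constants $C,\alpha$, or more generally $\beta(t,\delta)=\log(1/\delta)+O(\log t)$ uniformly in $\delta$. This form is what the Appendix choice (a union bound over cells or mixture martingale with $\log t$ terms) should provide. By the definition \eqref{eq:stopping_time}, we get
\[
\tau_\delta\le \max\bigl\{t_1,\ \inf\{t:\ t(1-\epsilon)/T^\star(P)\ge \log(Ct^\alpha/\delta)\}\bigr\}.
\]
A standard inversion lemma, Lemma 18 of Garivier--Kaufmann type, bounds the second term by $\frac{T^\star(P)}{1-\epsilon}\log(1/\delta)+o(\log(1/\delta))$. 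In particular $\tau_\delta<\infty$ a.s. Since $t_1$ does not depend on $\delta$, and $\mathrm{kl}(\delta,1-\delta)\sim\log(1/\delta)$ as $\delta\to0$, we obtain $\limsup_{\delta\to0}\tau_\delta/\mathrm{kl}(\delta,1-\delta)\le T^\star(P)/(1-\epsilon)$ on $\mathcal E$. Letting $\epsilon\downarrow0$ along a countable sequence concludes the proof.

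The main obstacle is the first step, in particular the transfer from Lemma~\ref{lem:glrt_growth} to the statistic actually used in the stopping rule, which is indexed by the random $\hat x(t)$ rather than by $x^\star(P)$. We handle this through the eventual equality $\hat x(t)=x^\star(P)$, which requires a strict gap $\theta_P(x^\star)-\max_{x\neq x^\star}\theta_P(x)>0$ and uniform convergence $\hat\theta_t\to\theta_P$. Both follow from convergence of finitely many cell estimates. The second delicate point is ensuring that the threshold from Appendix~\ref{app:stopping} has only logarithmic dependence on $t$, which is the property needed for the inversion step.
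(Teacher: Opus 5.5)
Your proposal is correct and follows essentially the same route as the paper. Both use the linear growth of the GLRT from Lemma~\ref{lem:glrt_growth}, the bound $\beta(t,\delta)\le\log(Ct^{\alpha}/\delta)$ (the paper takes $\alpha=2+8KM_Z$ and $C=\pi^2 2^{6KM_Z}$), the Lemma~18-type inversion of \citet{garivier2016optimalbestarmidentification}, and then let $\epsilon\downarrow 0$. You are slightly more careful than the paper on several points that it glosses over: you transfer the bound from $\min_{x\neq x^\star(P)}$ to $\min_{x\neq\hat x(t)}$ via eventual stabilization of $\hat x(t)$, you keep the $\max\{t_1,\cdot\}$ term, you note that $t_1$ is independent of $\delta$, and you take $\epsilon\downarrow0$ along a countable sequence.
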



Finally, under Assumption \ref{idenNDE}, the expected NDPO is identified by $\theta$. Therefore, the $\delta$-correctness and asymptotic optimality of our algorithm for maximizing $\theta$ translate directly into $\delta$-correctness and asymptotic optimality for the expected NDPO under Assumption \ref{idenNDE}.

\section{Experiments}
\label{sec:experiments}

We evaluate our method on simulations based on synthetic and real-world data.
The real-world experiments use two datasets: the IPinYou advertising dataset
and the framing dataset from the \texttt{mediation} package. 
Additional
synthetic experiments are reported in Appendix~\ref{syntheticexperiments}.
These experiments empirically support the validity of the lower bound in
Theorem~\ref{thm:nde_lower_bound}, illustrate cases where the arm maximizing
$\mathbb{E}[Y|do(X=x)]$ differs from the expected NDPO-optimal arm, and study
sensitivity to gap structure, mediator sparsity, baseline-arm choice, and
mediator cardinality. Runtime comparisons are reported in
Appendix~\ref{app:runtime_comparison}.

\begin{table}[t]
\centering
\caption{
Stopping-time statistics on the IPinYou.
The error rate is the proportion of runs in which the algorithm chooses a suboptimal-arm.
Lower values show better performance.
}
\label{tab:ipinyou_results}
\scalebox{1}{
\begin{tabular}{l|ccc}
\toprule
Method
& Median
& Interquartile range
& Error \\
\hline\hline
\makecell[l]{\textbf{TaS-NDPO} \\ (\textbf{ours})}
& \textbf{12{,}928}
& \textbf{[4{,}864,\;33{,}984]}
& \textbf{0.00} \\

\makecell[l]{TaS-NDPO \\ (arm-level)}
& 25{,}856
& [11{,}520,\;38{,}848]
& 0.00 \\

Baseline-First
& 18{,}304
& [8{,}960,\;38{,}016]
& 0.01 \\

Uniform
& 21{,}248
& [9{,}472,\;38{,}016]
& 0.00 \\
\bottomrule
\end{tabular}
}
\end{table}


{\bf Dataset.}
We use the IPinYou dataset \citep{zhang2015realtimebiddingbenchmarkingipinyou}, a publicly available benchmark for computational advertising, available at (\url{https://contest.ipinyou.com}).
The dataset contains approximately $N=3.16\times 10^6$ ad impressions.
From this dataset, we select $K=10$ creatives with sufficient support (i.e., each with at least 30,000 impressions).
The mediator $Z$ corresponds to discretized slot visibility with three levels,
and the outcome $Y$ is a binary click indicator.
We impose Assumption \ref{idenNDE}, corresponding to the absence of unmeasured mediator–outcome confounding under treatment interventions, which guarantees identification of the expected NDPO from interventional data.

{\bf Algorithms.}
We compare the following algorithms:
\begin{itemize}
    \item \textbf{TaS-NDPO (ours).} Algorithm \ref{alg:tas_ndpo}.

    \item \textbf{TaS-NDPO (arm-level).}
    Variant of TaS-NDPO by replacing cell-level forcing with classical arm-level forced exploration.

    \item \textbf{Baseline-First.}
    The algorithm first estimates the baseline distribution
    $P_{x_0}(z)$ using uniform sampling, and then samples arms uniformly to estimate 
    $P_x(1| z)$.
    \item \textbf{Uniform.}
    Uniform sampling over arms combined with the same plug-in estimator and stopping rule as ours.
\end{itemize}
To avoid zero empirical probabilities when some counts are small,
we use Laplace smoothing \citep{Manning2008}, i.e.,
$\hat P_x(z;t)=(N_{x,z}(t)+\alpha)/(N_x(t)+\alpha|\mathcal Z|)$
with $\alpha=0.05$.



{\bf Results.}
We first compare the arms that maximize the interventional mean $\mathbb{E}[Y|do(X=x)]$ with those that maximize the expected NDPO.
On the IPinYou dataset, 
creative~10,722 (ID) achieves the highest interventional mean, while creative~10,720 maximizes the expected NDPO.
Examining mediator distributions reveals that creative~10,722 appears in the top slot $23.7\%$ of the time, compared to only $9.9\%$ for creative~10,720.
Thus, part of its click-rate advantage is mediated through favorable slot positioning, which is removed under the NDPO perspective.

We next evaluate all BAI algorithms with a confidence level $\delta=0.05$ over 100 independent runs.
Table~\ref{tab:ipinyou_results} summarizes stopping time statistics, and Figure~\ref{fig:ipinyou_ecdf} reports the empirical cumulative distribution function (ECDF) of stopping times.
As shown in Table~\ref{tab:ipinyou_results}, all methods exhibit empirical error rates below 0.05. Thus, all methods empirically satisfy the $\delta$-correctness requirement.
The TaS-NDPO (ours) achieves a median stopping time of 12{,}928 samples, reducing sample complexity by roughly 50\% compared to the arm-level variant (25{,}856).
It also improves upon uniform sampling and the two-stage baseline while maintaining zero empirical error.
Figure~\ref{fig:ipinyou_ecdf} further illustrates the distributional behavior of the stopping times: the ECDF curve of TaS-NDPO lies uniformly above the alternatives, indicating consistently faster identification across runs.
The separation is most pronounced in the lower quantiles, demonstrating that cell-level exploration in our algorithm accelerates early evidence accumulation. 
These results support that our algorithm substantially improves sample efficiency for identifying the optimal-arm w.r.t. the expected NDPO.

\begin{figure}[t]
\centering
\begin{tikzpicture}
\begin{axis}[
    width=1\linewidth,
    height=0.6\linewidth,
    xmode=log,
    xmin=1e3, xmax=1e5,
    ymin=0, ymax=1,
    xlabel={Stopping time $t$},
    ylabel={ECDF},
    legend style={
    font=\footnotesize,
    at={(0.03,0.97)},
    anchor=north west,
    draw=none,
    fill=none,
    row sep=0.5pt
},
    grid=both,
]

\addplot+[thick, mark=none]
table[x=x, y=y, col sep=comma]
{ecdf_ours_cell.csv};
\addlegendentry{TaS-NDPO (ours)}

\addplot+[thick, dashed, mark=none]
table[x=x, y=y, col sep=comma]
{ecdf_arm_forced.csv};
\addlegendentry{TaS-NDPO (arm-level)}

\addplot+[thick, dotted, mark=none]
table[x=x, y=y, col sep=comma]
{ecdf_baseline_first.csv};
\addlegendentry{Baseline-First}

\addplot+[thick, dashdotted, mark=none]
table[x=x, y=y, col sep=comma]
{ecdf_uniform_arm.csv};
\addlegendentry{Uniform}

\end{axis}
\end{tikzpicture}
\caption{
ECDF of stopping times on the IPinYou.
Higher curves correspond to faster identification.
}
\label{fig:ipinyou_ecdf}
\end{figure}

{\bf Additional real-world validation.}
We also evaluate the algorithms on the framing dataset from the
\texttt{mediation} package at (\url{https://cran.r-project.org/web/packages/mediation/index.html}), using the same evaluation protocol as above.
Table~\ref{tab:framing_results} reports the median stopping time,
interquartile range, and empirical error rate. TaS-NDPO achieves the lowest
median stopping time and zero empirical error, providing additional
real-world validation beyond the IPinYou dataset.

\begin{table}[t]
\centering
\caption{
Stopping-time statistics on the framing dataset.
The error rate is the proportion of runs in which the algorithm chooses a
suboptimal arm. Lower values show better performance.
}
\label{tab:framing_results}
\begin{tabular}{l|ccc}
\toprule
Method
& Median
& Interquartile range
& Error \\
\hline\hline
\makecell[l]{\textbf{TaS-NDPO} \\ (\textbf{ours})}
& \textbf{5{,}888}
& \textbf{[2{,}624,\;9{,}408]}
& \textbf{0.00} \\

\makecell[l]{TaS-NDPO \\ (arm-level)}
& 6{,}400
& [3{,}840,\;10{,}304]
& 0.00 \\

Baseline-First
& 8{,}704
& [7{,}168,\;9{,}984]
& 0.00 \\

Uniform
& 7{,}680
& [3{,}392,\;11{,}518]
& 0.02 \\
\bottomrule
\end{tabular}
\end{table}

The additional sensitivity experiments in
Appendix~\ref{syntheticexperiments} show that the benefit of cell-level
forcing is regime-dependent. The gains are most substantial in settings
with rare mediator--outcome cells, while the arm-level Track-and-Stop
variant remains competitive in easier regimes. The gap-scaling experiment
varies the NDPO gap while holding the mediator structure fixed.

\section{Conclusion}

We study the fixed-confidence BAI problem of selecting the arm that maximizes the expected NDPO and develop a TaS–based algorithm that is $\delta$-correct and asymptotically optimal.
We then evaluate its efficiency through simulation experiments.

We focus on the expected NDPO in this paper.
Other related quantities include the controlled direct effect and the natural indirect effect, defined as $\mathbb{E}[Y_{x,z}] - \mathbb{E}[Y_{x_0,z}]$ and $\mathbb{E}[Y_{x_0,Z_x}] - \mathbb{E}[Y_{x_0}]$, respectively \citep{Pearl2001}. 
In Appendix \ref{sec_niecde}, we consider the corresponding BAI problems for optimizing $\mathbb{E}[Y_{x,z}]$ over $x$ (for fixed $z$) and for optimizing $\mathbb{E}[Y_{x_0,Z_x}]$ over $x$. 
The former reduces to a standard BAI problem, whereas the latter closely parallels our algorithm for the expected NDPO.
Furthermore, we focused on the case of a binary outcome since it covers a wide range of applications related to website optimization. Extending our statistical framework to multi-category outcomes (e.g., five-star reviews) is relatively easy. 

An interesting direction for future work is to extend these ideas to path-specific objectives in more general causal structures \citep{Avin2005,Shpitser2018,Malinsky2019}.

\begin{acknowledgements} 
The authors thank the anonymous reviewers for their time and thoughtful comments.
J. Komiyama was supported by the MBZUAI Start-up Fund [BF0121].
\end{acknowledgements}

\bibliography{uai2026-template}

\newpage
\onecolumn
\appendix

\title{Fixed-Confidence Best-Arm Identification for Causal Mediation Analysis\\(Supplementary Material)}
\maketitle

\section{Motivating Examples}
\label{app-moti}

We present motivating examples illustrating why we study the maximization of the expectation of NDPO.

{\bf Example 1 (Advertisement).} 
We present a motivating example illustrating why we study the maximization of the expectation of NDPO.
We consider the problem of selecting an advertisement creative ($X$) from multiple candidates, where the outcome ($Y$) represents a click indicator, purchase amount, or conversion.
Researchers often select the optimal advertisement by maximizing the expected potential outcome under arm $x$.
In practice, however, an advertisement may improve the outcome $Y$ partly through undesirable mechanisms-for example by exploiting user confusion, eliciting accidental clicks, or leveraging platform-specific presentation artifacts.
We treat such undesired factors as a mediator ($Z$).
Improving outcomes primarily by changing $Z$ should not be considered a legitimate success (e.g., due to ethical or regulatory concerns).
In reality, for example, \citet{10.1145/3487552.3487850} pointed out the widespread use of misleading and manipulative tactics in online political advertising.
The NDPO ignores effects achieved through such undesirable factors. 
Consequently, selecting the optimal advertisement by maximizing the NDPO leads to fairer advertising decision-making, based purely on the direct effect rather than on effects mediated by undesirable factors.



{\bf Example 2 (Medicine).} 
In medical decision-making, physicians aim to select treatments that optimally improve patient outcomes. However, some treatments may achieve their effects through adverse intermediate responses (e.g., fever or other adverse reactions). 
In such cases, it is desirable to evaluate treatment effects that exclude pathways operating through these adverse mediators. 
Researchers therefore seek to maximize the direct effect, thereby maximizing the treatment benefit while avoiding pathways through these adverse mediators. 
The indirect effect is also of interest, as it quantifies the effect transmitted through these adverse mediators, and minimizing this effect is another important research objective. 
In this paper, however, we focus on maximizing the direct effect.

{\bf Example 3 (Politics).} 
In political decision-making, policymakers often aim to select policies that effectively reduce inflation. 
However, some policies may achieve this through undesirable mechanisms, such as increasing unemployment or reducing economic growth. In such cases, it may be preferable to evaluate policies based on their effects that exclude these undesirable pathways.
Although the direct effect cannot generally be reproduced by real-world interventions, it provides a useful measure of the desirability of treatment policies under a preference for avoiding specific mediating pathways.
In some applications, such as political science, researchers may focus on the indirect effect to understand the underlying mechanisms through which an intervention exerts its influence. 
In contrast, our objective is to identify the treatment policy that is most preferred under this criterion.

{\bf Example 4 (Education).} In educational decision-making, teachers often choose among candidate classes or programs to improve students’ performance. However, some classes may increase test scores through undesirable mechanisms, such as excessively stressful learning environments. In such cases, it may be preferable to evaluate options based on their effects that exclude these undesirable pathways.

{\bf Example 5 (Legal).} In legal decision-making, policymakers often design laws or regulations to reduce crime. However, some policies may achieve this through undesirable mechanisms, such as excessive surveillance. In such cases, it may be preferable to evaluate policies based on their effects that exclude these undesirable pathways.

\section{Stopping Condition}
\label{app:stopping}

This appendix justifies the stopping rule introduced in
Section~\ref{sec:algorithm}, and in particular the choice of the confidence
radius $\beta(t,\delta)$ ensuring $\delta$-correctness.

\subsection{Uniform concentration for discrete distributions}

We begin with a standard concentration inequality for empirical distributions
on a finite alphabet.

\begin{lemma}[Concentration on a finite alphabet]
\label{lem:alphabet_conc}
Let $\hat P_n$ be the empirical distribution obtained from $n$ i.i.d.\ samples
drawn from a categorical distribution $P$ supported on an alphabet of size $M$.
Then, for any $u>0$,
\begin{equation}
\Pr\!\left(\mathrm{KL}(\hat P_n \Vert P) \ge u\right)
\;\le\;
(n+1)^M e^{-n u}.
\end{equation}
\end{lemma}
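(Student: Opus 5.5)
The plan is the method of types: split the event according to the empirical distribution $\hat P_n$ realized, bound the probability of each such empirical distribution, and count how many there are. For a sequence $x^n\in\mathcal A^n$ with $|\mathcal A|=M$, let its type $T$ be the empirical distribution of $x^n$. The set $\mathcal T_n$ of possible types consists of vectors $(k_1/n,\dots,k_M/n)$ with nonnegative integers summing to $n$. Each coordinate $k_i$ takes at most $n+1$ values, so
\begin{equation}
|\mathcal T_n|\le (n+1)^M .
\end{equation}
Then $\{\mathrm{KL}(\hat P_n\Vert P)\ge u\}$ is the disjoint union over types $T\in\mathcal T_n$ with $\mathrm{KL}(T\Vert P)\ge u$ of the type classes $\{\hat P_n=T\}$.

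Next I bound the probability of a single type class. For any sequence $x^n$ of type $T$, the i.i.d. structure gives
\begin{equation}
P^n(x^n)=\prod_a P(a)^{nT(a)}=\exp\bigl(-n(H(T)+\mathrm{KL}(T\Vert P))\bigr).
\end{equation}
If $P(a)=0$ for some $a$ with $T(a)>0$, both sides vanish, so this case is harmless. Multiplying by the size of the type class gives
\begin{equation}
\Pr(\hat P_n=T)=|\{x^n\text{ of type }T\}|\,P^n(x^n).
\end{equation}

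The only step requiring an argument is the bound $|\{x^n\text{ of type }T\}|\le e^{nH(T)}$. I obtain it by applying the product formula with $P$ replaced by $T$. Every sequence in the class then has probability $e^{-nH(T)}$ under $T^n$, and the class has total $T^n$-probability at most $1$. Hence
\begin{equation}
\Pr(\hat P_n=T)\le e^{-n\,\mathrm{KL}(T\Vert P)}.
\end{equation}

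Finally, a union bound over the at most $(n+1)^M$ admissible types, each contributing at most $e^{-nu}$, gives $(n+1)^M e^{-nu}$. Zero-probability letters of $P$ only shrink the relevant set of types, so they do not affect the argument. I expect no real obstacle; the type-class cardinality bound is the one point that needs care, and the self-probability trick above settles it.
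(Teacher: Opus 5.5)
Your proposal is correct: the type count $|\mathcal T_n|\le (n+1)^M$, the per-type bound $\Pr(\hat P_n=T)\le e^{-n\,\mathrm{KL}(T\Vert P)}$ (where your self-probability argument bounds the type-class size), and a union bound together give exactly the stated inequality, and the zero-probability letters are handled correctly. The paper does not prove this lemma itself but cites it as a classical result from Cover and Thomas, and your method-of-types argument is that standard proof.
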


Lemma~\ref{lem:alphabet_conc} is a classical concentration inequality for discrete symbols (see, e.g., \citealp{Cover2009}).

\begin{lemma}[Concentration on a combination of finite alphabets]
\label{lem:alphabet_conc_comb}
Let $i = 1,2,\dots,I$ for some finite positive integer $I$.
For each $i$, let $\hat P_{n_i,i}$ be the empirical distribution obtained from $n_i$ i.i.d.\ samples
drawn from a categorical distribution $P_i$ supported on an alphabet of size $M_i$.
Let $M = \max_i M_i$.
Then, for any $u>0$ and for any $(n_1, n_2, \dots, n_I)$ such that $n = \sum_{i=1}^I n_i$, 
\begin{equation}
\Pr\!
\left(
\sum_{i = 1}^I n_i \mathrm{KL}(\hat P_{n_i,i} \Vert P_i) \ge n u
\right)
\;\le\;
(n+1)^{3MI} e^{-n u}
\label{ineq_concentration_comb}
\end{equation}
holds.
\end{lemma}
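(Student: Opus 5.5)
We prove Lemma~\ref{lem:alphabet_conc_comb} with the method of types, extending the argument behind Lemma~\ref{lem:alphabet_conc} to the product of the $I$ independent empirical distributions. Fix $(n_1,\dots,n_I)$. If some $n_i=0$, its term vanishes and we drop it, so assume every $n_i\ge 1$.

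First, for each $i$, let $\mathcal T_i$ be the set of possible types $\hat P_{n_i,i}$, i.e.\ empirical distributions with denominator $n_i$ on an alphabet of size $M_i$. We have $|\mathcal T_i|\le (n_i+1)^{M_i}\le (n+1)^{M}$. Hence the set of joint type tuples $(\hat P_{n_1,1},\dots,\hat P_{n_I,I})$ has cardinality at most $(n+1)^{MI}$.

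Second, fix a type tuple $(T_1,\dots,T_I)$. The standard type-class bound gives $P_i^{n_i}(\text{type class of }T_i)\le e^{-n_i\mathrm{KL}(T_i\Vert P_i)}$. By independence across $i$, the probability of the whole tuple is at most
\begin{equation*}
\exp\Bigl(-\textstyle\sum_i n_i\mathrm{KL}(T_i\Vert P_i)\Bigr).
\end{equation*}
On the event in \eqref{ineq_concentration_comb}, the exponent is at most $-nu$.

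Third, a union bound over all tuples satisfying $\sum_i n_i\mathrm{KL}(T_i\Vert P_i)\ge nu$ yields
\begin{equation*}
\Pr\Bigl(\textstyle\sum_i n_i\mathrm{KL}(\hat P_{n_i,i}\Vert P_i)\ge nu\Bigr)\le (n+1)^{MI}e^{-nu}\le (n+1)^{3MI}e^{-nu}.
\end{equation*}
The last step holds since $n+1\ge 1$. The extra factor $3$ in the stated exponent is therefore slack. It presumably leaves room for the later application, where the number of samples per alphabet is random (or for alphabets of size $2M$ from joint $(Z,Y)$ pairs). We would simply note that the inequality holds a fortiori.

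The only point requiring care is the type-class probability bound $P^{n}(T)\le e^{-n\mathrm{KL}(T\Vert P)}$. This is the classical result from \citet{Cover2009}, the same one underlying Lemma~\ref{lem:alphabet_conc}, and we cite it rather than reprove it. The second subtlety is that the $n_i$ must be deterministic for the i.i.d.\ structure to apply. The statement fixes $(n_1,\dots,n_I)$, so this holds here. Handling data-dependent counts is deferred to the subsequent union bound over counts in the stopping-rule analysis, which is where the slack factor $3$ is likely consumed.
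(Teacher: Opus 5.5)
Your proof is correct and shares the paper's skeleton: enumerate the joint type tuples $(T_1,\dots,T_I)$, bound each tuple's probability using independence across $i$, and take a union bound over the tuples in the event. The difference is the per-tuple bound.

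The paper applies Lemma~\ref{lem:alphabet_conc} componentwise. It bounds each tuple's probability by $\prod_i (n_i+1)^{M_i}e^{-n_i u}\le (n+1)^{MI}e^{-nu}$ and multiplies by a tuple count of $n^{2MI}$. That product is the sole origin of the exponent $3MI$. So the factor $3$ is an artifact of this bookkeeping, not slack reserved for random counts; the union over counts is handled separately through $\delta_t$ in Appendix~\ref{app:stopping}.

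You instead use the single-type-class bound $P_i^{n_i}(\text{type class of }T_i)\le e^{-n_i\mathrm{KL}(T_i\Vert P_i)}$. This gives the sharper $(n+1)^{MI}e^{-nu}$, and it is also the more rigorous step. A tuple with $\sum_i n_i\mathrm{KL}(T_i\Vert P_i)\ge nu$ need not have $\mathrm{KL}(T_i\Vert P_i)\ge u$ for every $i$. The componentwise factors $e^{-n_i u}$ in \eqref{ineq_singlecomb} are therefore only justified through exactly the product $\prod_i e^{-n_i\mathrm{KL}(T_i\Vert P_i)}\le e^{-nu}$ that you write down. Your count $(n+1)^{MI}$ is likewise valid for every $n\ge 1$, whereas $n^{2MI}$ undercounts at $n=1$.

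One small point: the lemma does not state that the samples are independent across different $i$. Both your argument and the paper's need this, so it should be an explicit hypothesis.
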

\begin{proof}
Let $\mathcal{V}$ be the number of possible combinations of empirical means $(\hat{P}_{n_1,1}, \hat{P}_{n_2,2}, \dots, \hat{P}_{n_I,I})$ such that $\sum_i n_i = n$. Since there are $I$ empirical distribution of categories at most $M$ and each category's empirical mean is in $\{0, 1/n_i, 2/n_i, \dots, 1\}$ such that $n_i \le n$, we have $|\mathcal{V}| \le n^{2MI}$.
Let 
\begin{equation}
\mathcal{P}_u
=
\left\{
(\hat{P}_{n_1,1}, \hat{P}_{n_2,2}, \dots, \hat{P}_{n_I,I}) \in \mathcal{V}: 
\sum_{i = 1}^I n_i \mathrm{KL}(\hat P_{n_i,i} \Vert P_i) \ge n u
\right\}.
\end{equation}
Then, Lemma \ref{lem:alphabet_conc} states that the probability that each such set of empirical distributions in $\mathcal{P}_u$ realizes is at most 
\begin{equation}\label{ineq_singlecomb}
\prod_i (n_i+1)^{M_i} e^{-n_i u} \le (n+1)^{MI} e^{-n u},
\end{equation}
and thus
\begin{align}
\Pr\!
\left(
\sum_{i = 1}^I n_i \mathrm{KL}(\hat P_{n_i,i} \Vert P_i) \ge n u
\right)
&\le
|\mathcal{P}_u| \times \text{(The right-hand side of \eqref{ineq_singlecomb})}\\
&\le n^{2MI} \times (n+1)^{MI} e^{-nu}\\
&\le (n+1)^{3MI} e^{-nu}.
\end{align}
\end{proof}

\subsection{Application to mediator and outcome distributions}

We apply Lemma~\ref{lem:alphabet_conc_comb} to both components of the causal model.
For each arm $x$, let $\hat P_{Z| x,n}$ denote the empirical distribution of
the mediator $Z$ after $n$ pulls of arm $x$.
Let $M_Z := |\mathcal Z|$ and $M_Y := |\mathcal Y|$ denote the alphabet sizes of
the mediator and outcome, respectively. In our setting $M_Y=2 < M_Z$.

\paragraph{Time-uniform confidence allocation.}
To obtain bounds that hold uniformly over all $t\ge 1$,
define
\begin{equation}
\delta_t := \frac{\delta}{\pi^2 t^{2 + 2K M_Z}}.
\end{equation}
Since $\sum_{t=1}^\infty 1/(\pi^2 t^2) \le 1$,
a union bound over all $t$ and all possible combinations of $(N_x(t), N_{x,z}(t))$ (number of such possible combinations is at most $t^{2K M_Z}$) yields an event of probability at least $1-\delta$.
Namely, by setting
\begin{align}
\label{eq:beta}
\beta(t,\delta)
&=
\log\Big(\tfrac{\pi^2 t^{2+2K M_Z}}{\delta}\Big)
+
6 M_Z K \log(t+1),
\end{align}
we have
\begin{align}
\forall t
\{
\mathcal L_t(Q) \le \beta(t,\delta)
\}
\end{align}
holds with probability at least $1-\delta$.

\section{IMPLEMENTATION DETAILS FOR THE REDUCED INNER OPTIMIZATION}
\label{app:inner_qp}

This appendix describes the numerical solution of the reduced inner
optimization problem appearing in the plug-in allocation
\eqref{eq:plugin_allocation} and in the GLRT statistic
\eqref{eq:glrt_stat}, after reduction to a single competitor and
restriction to the relevant arms.

\subsection{Reduced weighted inner problem}

Fix a competitor $x'\neq x^\star$ and let
\begin{equation}
\gamma_0 := w_{x_0},
\qquad
\alpha_{s,z} := w_{x^\star}\,\hat P_{x^\star}(z),
\qquad
\alpha_{p,z} := w_{x'}\,\hat P_{x'}(z),
\end{equation}
where $w$ denotes the current allocation vector.

Let
\begin{equation}
P_0(z) := \hat P_{x_0}(z), \quad
p_s(z) := \hat P_{x^\star}(1| z), \quad
p_p(z) := \hat P_{x'}(1| z).
\end{equation}

After the three-arm reduction (Section~\ref{subsec:optimization_details}),
the inner problem reduces to
\begin{align}
\min_{\substack{q_0 \in \Delta(\mathcal Z)\\
q_s, q_p \in (0,1)^{|\mathcal Z|}}}
\quad &
\gamma_0\,\mathrm{KL}(P_0 \Vert q_0)
+
\sum_{z\in\mathcal Z}
\alpha_{s,z}\,
\mathrm{KL}(p_s(z)\Vert q_s(z))
\nonumber\\
&\quad+
\sum_{z\in\mathcal Z}
\alpha_{p,z}\,
\mathrm{KL}(p_p(z)\Vert q_p(z))
\label{eq:reduced_inner_weighted}
\\
\text{s.t.}\quad &
\sum_{z\in\mathcal Z}
q_0(z)\,(q_p(z)-q_s(z))
\;\ge\;
0.
\label{eq:reduced_inner_constraint}
\end{align}

The feasible set is compact and the objective is continuous.
Moreover, the problem is convex in $(q_s,q_p)$ for fixed $q_0$
and convex in $q_0$ for fixed $(q_s,q_p)$,
but not jointly convex.

We solve \eqref{eq:reduced_inner_weighted}--\eqref{eq:reduced_inner_constraint}
via block coordinate descent (alternating optimization).

\subsection{Update of $(q_s,q_p)$ for fixed $q_0$}

For fixed $q_0$, the subproblem in $(q_s,q_p)$ is convex:
\begin{align}
\min_{q_s,q_p}
\quad &
\sum_{z}
\alpha_{s,z}\,\mathrm{KL}(p_s(z)\Vert q_s(z))
+
\sum_{z}
\alpha_{p,z}\,\mathrm{KL}(p_p(z)\Vert q_p(z))
\label{eq:qs_qp_subproblem}
\\
\text{s.t.}\quad &
\sum_{z} q_0(z)(q_p(z)-q_s(z)) \ge 0.
\nonumber
\end{align}

If the constraint is inactive at the unconstrained minimizer
$(q_s,q_p)=(p_s,p_p)$, then this is the solution.

Otherwise, the constraint is active and we introduce a
Lagrange multiplier $\lambda\ge 0$.
For each $z$, the subproblem separates and we must solve
\begin{equation}
\min_{q\in(0,1)}
\alpha\,\mathrm{KL}(p\Vert q) + b\,q,
\end{equation}
where $b=\lambda q_0(z)$ (with opposite signs for $q_s$ and $q_p$).

Using
\begin{equation}
\frac{\partial}{\partial q}\mathrm{KL}(p\Vert q)
=
\frac{q-p}{q(1-q)},
\end{equation}
the stationarity condition yields the quadratic equation
\begin{equation}
b q^2 - (\alpha + b) q + \alpha p = 0.
\end{equation}

Among its real roots in $(0,1)$, the one minimizing the
objective is selected.
The multiplier $\lambda$ is chosen by one-dimensional
bisection so that the constraint
\eqref{eq:reduced_inner_constraint}
holds with equality when active.
Monotonicity of the constraint function in $\lambda$
ensures existence and uniqueness of the solution.

\subsection{Update of $q_0$ for fixed $(q_s,q_p)$}

For fixed $(q_s,q_p)$, define
\begin{equation}
d(z) := q_p(z) - q_s(z).
\end{equation}
The subproblem in $q_0$ is
\begin{align}
\min_{q_0 \in \Delta(\mathcal Z)}
\quad &
\mathrm{KL}(P_0 \Vert q_0)
\text{ s.t. }
\sum_{z} q_0(z)\, d(z) \ge 0.
\label{eq:q0_subproblem}
\end{align}

If the constraint is satisfied at $q_0=P_0$, then this is optimal.

Otherwise, introducing a multiplier $\mu\ge 0$
and using KKT conditions yields the closed-form solution
\begin{equation}
q_0(z)
=
\frac{P_0(z)}{\nu - \mu d(z)},
\end{equation}
where $\nu$ is determined by the normalization condition
$\sum_z q_0(z)=1$.
The multiplier $\mu$ is obtained by one-dimensional
bisection so that the constraint holds with equality
when active.

\subsection{Alternating optimization and convergence}

The algorithm alternates between:

\begin{itemize}
\item updating $(q_s,q_p)$ given $q_0$,
\item updating $q_0$ given $(q_s,q_p)$.
\end{itemize}

Each block subproblem is solved exactly and is convex.
The objective in \eqref{eq:reduced_inner_weighted}
is non-increasing at every iteration,
and the feasible set is compact.
Therefore, by standard results on block coordinate descent
for bi-convex problems \citep{Tseng2001},
every limit point of the sequence is a stationary point
of \eqref{eq:reduced_inner_weighted}--\eqref{eq:reduced_inner_constraint}.

In practice, the dimensionality is $O(|\mathcal Z|)$
and convergence is rapid.
This procedure is used both in the plug-in allocation
computation and in evaluation of the GLRT statistic.

\section{BAI Algorithms for other mediation measures}
\label{sec_niecde}

Finding the best arm in terms of $\mathbb{E}[Y_{x,z}]$, which is the first term of the controlled direct effect (CDE) at $Z=z$, 
is relatively straightforward using a best arm identification algorithm.
For ease of discussion, let us consider $\mathbb{E}[Y_{x,z}]$ at $z=1$
\begin{equation}
\mathbb{E}_P\!\left[Y_{x,z=1}\right].
\end{equation}
This boils down to standard bandits/BAI but we only count the sample when it is shown at a specific position $z = 1$. Namely,
\begin{align}
N_x(t) = \sum_{s=1}^t \mathbf{1}\{X_t = x, Z = 1\}, 
\hat{P}_x(t) = 
\frac{\sum_{s=1}^t Y_t \mathbf{1}\{X_t = x, Z = 1\}}{N_x(t)}
\end{align}
and one can run Track-and-Stop or Top-two Thompson sampling \citep{russottts} based on these statistics.

Regarding the Natural Indirect effect (NIE),
the first term of NDE is
\begin{align}
\mathbb{E}_P\!\left[Y_{x_0,Z_x}\right]= \sum_z P_{x_0}(1|z) P_x(z).
\end{align}

We can consider the alternative distributions as:
\begin{align}
\mathrm{Alt}(P)
&:=
\left\{
Q \in \mathcal{P} :
\exists x \neq x^*(P) 
\text{ such that } 
\sum_{z \in \mathcal{Z}} Q_{x_0}(1|z) Q_x(z) 
>
\sum_{z \in \mathcal{Z}} Q_{x_0}(1|z) Q_{x^*(P)}(z) 
\right\}.
\label{eq:altP_nie}
\end{align}
We hypothesize that our Track-and-Stop algorithm with a cutting-plane procedure (Section~\ref{subsec:optimization_details}) can also achieve asymptotically optimal sample complexity for identifying the best arm w.r.t. the $\mathbb{E}[Y_{x_0,Z_x}]$-criterion.
In summary, finding the best arm for $\mathbb{E}[Y_{x,z}]$ is straightforward. Finding the best arm for $\mathbb{E}[Y_{x_0,Z_x}]$ can be done using our framework.


\section{Synthetic Experiments}
\label{syntheticexperiments}


In this appendix, we conduct synthetic experiments.

\paragraph{Setting.}
We consider instances with $K=3$ arms and $M=3$ mediator values.
Full parameter specifications are as follows:
Let $x_0=0$ and $\mathcal Z=\{0,1,2\}$. Fix
$P_0(Z)=(0.8,0.15,0.05)$.
Let $P(Z| X=1)=(0.1,0.2,0.7)$ and $P(Z| X=2)=P(Z| X=0)=P_0(Z)$.
Set $P(Y=1| X=0,Z=z)=0.2$,
$P(Y=1| X=1,Z\in\{0,1\})=0.2$, $P(Y=1| X=1,Z=2)=0.9$,
and $P(Y=1| X=2,Z=z)=0.235+\Delta$ for all $z$.
Then $\theta(1)=0.235$ and $\theta(2)=0.235+\Delta$, so the expected NDPO gap equals $\Delta$,
while $\mathbb E[Y_1]=0.69$ is the largest interventional mean.
To avoid zero empirical probabilities when some counts are small,
we use Laplace smoothing \citep{Manning2008}, i.e.,
$\hat P_x(z;t)=(N_{x,z}(t)+\alpha)/(N_x(t)+\alpha|\mathcal Z|)$
with $\alpha=0.05$.
All methods use the same confidence level $\delta=0.05$
and the same stopping threshold $\beta(t,\delta)$
from Appendix~\ref{app:stopping}.
Optimization tolerances and forcing schedules are matched whenever applicable.
Performance was robust to moderate changes in these settings.

\paragraph{Algorithms.}
We compare the following algorithms:

\begin{itemize}
    \item \textbf{TaS-NDPO (ours).} Algorithm \ref{alg:tas_ndpo}.

    \item \textbf{TaS-NDPO (arm-level).}
    Variant of TaS-NDPO by replacing cell-level forcing with classical arm-level forced exploration.

    \item \textbf{Baseline-First.}
    The algorithm first estimates the baseline distribution
    $P_{x_0}(z)$ using uniform sampling, and then samples arms uniformly to estimate 
    $P_x(1| z)$.

    \item \textbf{Uniform.}
    Uniform sampling over arms combined with the same plug-in estimator and stopping rule as ours.
    \item 
\textbf{TaS-IM (diagnostic).}
TaS algorithm for optimizing the interventional mean (IM) $\mathbb{E}[Y_x]$.
\end{itemize}

\paragraph{Gap-scaling family.}
Mediator distributions are fixed while the NDPO gap
\begin{equation}
\Delta := \theta(x^\star)-\max_{x\neq x^\star}\theta(x)
\end{equation}
is controlled through the outcome model of arm~2.
This isolates the effect of gap magnitude.

To validate Theorem~\ref{thm:nde_lower_bound}, we vary $\Delta$
and measure the median stopping time.
Figure~\ref{fig:gap_scaling} shows log--log scaling.
The empirical slope aligns with the reference $1/\Delta^2$,
confirming the predicted instance-dependent complexity.

\begin{figure}[t]
\centering
\includegraphics[width=0.5\linewidth]{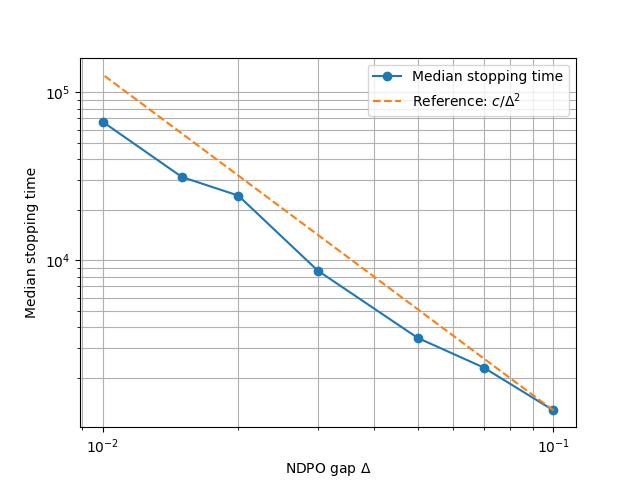}
\caption{Median stopping time vs.\ NDPO gap $\Delta$ (log--log scale).
Dashed line: $1/\Delta^2$.}
\label{fig:gap_scaling}
\end{figure}

\paragraph{Decision differences.}
In this model, the interventional-mean-optimal arm differs from the expected NDPO-optimal arm.
TaS-IM selects arm~1, while TaS-NDPO consistently identifies arm~2,
demonstrating that optimizing expected NDPO can lead to different decisions.

\subsection{Additional sensitivity experiments}
\label{app:sensitivity_experiments}

We further evaluate the effect of mediator sparsity, mediator cardinality,
and baseline-arm choice. The gap-scaling experiment above varies the NDPO
gap while holding the mediator structure fixed. In contrast, the experiments
below vary the mediator structure or the reference arm. Overall, the results
show that the advantage of cell-level forcing is regime-dependent, with
substantial gains in settings with rare mediator--outcome cells. The
arm-level Track-and-Stop baseline already represents a strong adaptive
method and performs well in easier regimes.

\paragraph{Mediator sparsity.}
Table~\ref{tab:sparsity} reports median stopping times as the mediator
sparsity parameter $\eta$ varies. Smaller values of $\eta$ correspond to
rarer mediator--outcome cells. Our method is best or tied for best across
all values of $\eta$. The improvement over arm-level forcing becomes more
pronounced as rare cells become important, showing the benefit of explicit
cell-level forcing in sparse mediator regimes.

\begin{table}[t]
\centering
\caption{
Median stopping time as a function of mediator sparsity $\eta$.
Lower is better. Bold indicates the best method for each $\eta$.
}
\label{tab:sparsity}
\begin{tabular}{c|cccc}
\toprule
$\eta$
& TaS-NDPO (ours)
& TaS-NDPO (arm-level)
& Uniform
& Baseline-First \\
\midrule
0.5   & \textbf{2048}  & \textbf{2048} & 2560  & 6912  \\
0.1   & \textbf{20368} & 20608         & 28928 & 21664 \\
0.05  & \textbf{29184} & 36864         & 33664 & 31960 \\
0.01  & \textbf{36480} & 47104         & 46080 & 41216 \\
0.005 & \textbf{41216} & 57088         & 46208 & 47520 \\
0.001 & \textbf{45056} & 54528         & 47616 & 49024 \\
\bottomrule
\end{tabular}
\end{table}

\paragraph{Mediator cardinality.}
Table~\ref{tab:cardinality} reports median stopping times as the number of
mediator values $M$ varies. The results are not monotone in $M$, confirming
that the benefit of cell-level forcing depends on the structure of the
instance. Our method performs best for $M=2$ and $M=16$, while the
arm-level variant is best for $M=8$ and Baseline-First is slightly better
for $M=4$. This shows that arm-level exploration can be competitive in
easier regimes, whereas cell-level forcing is most useful when the relevant
mediator--outcome cells drive the difficulty of identification.

\begin{table}[t]
\centering
\caption{
Median stopping time as a function of mediator cardinality $M$.
Lower is better. Bold indicates the best method for each $M$.
}
\label{tab:cardinality}
\begin{tabular}{c|cccc}
\toprule
$M$
& TaS-NDPO (ours)
& TaS-NDPO (arm-level)
& Uniform
& Baseline-First \\
\midrule
2  & \textbf{13312} & 15360          & 15872 & 13440 \\
4  & 21632          & 23592          & 25432 & \textbf{21504} \\
8  & 21504          & \textbf{14976} & 18560 & 19584 \\
16 & \textbf{10752} & 20864          & 22016 & 22016 \\
\bottomrule
\end{tabular}
\end{table}

\paragraph{Baseline-arm choice.}
Table~\ref{tab:baseline_choices} reports median stopping times for different
choices of the baseline arm $x_0$. Changing $x_0$ changes the expected NDPO
target and may also change the NDPO-best arm. For $x_0=0$, the instance is
harder and our method achieves the smallest stopping time. For $x_0=1$ and
$x_0=2$, the instance is easier, and TaS-NDPO, the arm-level variant, and
Uniform all stop at the same median time. Baseline-First is slower in these
easy cases because it separately estimates the baseline distribution before
sampling the remaining arms.

\begin{table}[t]
\centering
\caption{
Median stopping time as a function of baseline-arm choice $x_0$.
Lower is better. Bold indicates the best method for each baseline choice
in a 4-arm instance.
}
\label{tab:baseline_choices}
\begin{tabular}{c|ccccc}
\toprule
$x_0$
& NDPO best arm
& TaS-NDPO (ours)
& TaS-NDPO (arm-level)
& Uniform
& Baseline-First \\
\midrule
0 & 3 & \textbf{59904} & 69248          & 70144          & 61952 \\
1 & 1 & \textbf{512}   & \textbf{512}   & \textbf{512}   & 5376  \\
2 & 2 & \textbf{512}   & \textbf{512}   & \textbf{512}   & 5376  \\
\bottomrule
\end{tabular}
\end{table}

\subsection{Runtime comparison}
\label{app:runtime_comparison}

We also measure the wall-clock runtime of each method. Table~\ref{tab:runtime}
compares statistical efficiency, measured by median stopping time, with
computational cost, measured by average runtime per run. TaS-NDPO incurs a
moderate computational overhead relative to non-adaptive baselines, but its
runtime is comparable to the arm-level adaptive baseline and it achieves the
lowest median stopping time.

\begin{table}[t]
\centering
\caption{
Comparison of statistical efficiency and computational cost.
The median stopping time measures sample complexity, and the average runtime
is measured in seconds per run. Lower is better.
}
\label{tab:runtime}
\begin{tabular}{lcc}
\toprule
Method
& Median stopping time
& Average runtime (s) \\
\midrule
TaS-NDPO (ours)
& \textbf{36{,}480}
& 22.5 \\

TaS-NDPO (arm-level)
& 47{,}104
& 25.3 \\

Uniform
& 46{,}080
& 6.75 \\

Baseline-First
& 46{,}336
& 6.43 \\
\bottomrule
\end{tabular}
\end{table}

\section{Proofs}
\label{appa}

In this appendix, we provide the proofs of the theorems, propositions, and lemmas stated in the main text.

\subsection{Proof of Theorem~\ref{thm:nde_identification}}
\label{app:NDPO_identifiability_proof}

\Theoremone*

\begin{proof}
Under Assumption \ref{idenNDE}, we have 
\begin{equation}
\begin{aligned}
&\mathbb{E}[Y_{x,Z_{x_0}}]\\
&=\sum_{y,z}y\mathbb{P}(Y_{x,z}=y|Z_{x_0}=z)\mathbb{P}(Z_{x_0}=z)\\
&=\sum_{y,z}y\mathbb{P}(Y_{x,z}=y)\mathbb{P}(Z_{x_0}=z)(\because\text{Assumption \ref{idenNDE}})\\
&=\sum_{y,z}y\mathbb{P}(Y_{x,z}=y|Z_x=z)\mathbb{P}(Z_{x_0}=z)(\because\text{Assumption \ref{idenNDE}})\\
&=\sum_{y,z}y\mathbb{P}(Y_{x}=y|Z_x=z)\mathbb{P}(Z_{x_0}=z)(\because\text{Counterfactual Consistency})\\
&=\sum_{y,z}y\mathbb{P}(Y=y|Z=z,do(X=x))\mathbb{P}(Z=z|do(X=x_0)).
\end{aligned}
\end{equation}
Then, we have the theorem.
\end{proof}

\subsection{Proof of Theorem~\ref{thm:nde_lower_bound}}
\label{app:lower_bound_proof}

\Theoremtwo*
\begin{proof}
The proof follows the standard change-of-measure argument \citep{lairobbins1985}.

Let $N_x(t)$ denote the (random) number of times arm $x$ is sampled up to time
$t$, and let $\tau_\delta$ be the stopping time of a $\delta$-correct algorithm.
For any alternative model $Q \in \mathrm{Alt}(P)$, the transportation inequality
(Lemma~1 in \citealp{garivier2016optimalbestarmidentification}) yields
\begin{equation}
\sum_{x=0}^{K-1}
\mathbb{E}_P\!\left[N_x(\tau_\delta)\right]
\mathrm{KL}(P_x \Vert Q_x)
\;\ge\;
\mathrm{kl}(\delta,1-\delta),
\end{equation}
where $P_x$ and $Q_x$ denote the joint distributions of $(Z,Y)$ under arm $x$
for models $P$ and $Q$, respectively.

Define the sampling proportions
\begin{equation}
w_x
:=
\frac{\mathbb{E}_P[N_x(\tau_\delta)]}{\mathbb{E}_P[\tau_\delta]},
\qquad
\sum_x w_x = 1.
\end{equation}
Dividing both sides by $\mathbb{E}_P[\tau_\delta]$ gives
\begin{equation}
\mathbb{E}_P[\tau_\delta]
\sum_x w_x \mathrm{KL}(P_x \Vert Q_x)
\;\ge\;
\mathrm{kl}(\delta,1-\delta).
\end{equation}

Under the joint mediator--outcome observation model, the KL divergence for a
fixed arm $x$ decomposes as
\begin{equation}
\mathrm{KL}(P_x \Vert Q_x)
=
\sum_{z \in \mathcal{Z}}
P_{z| x}
\mathrm{KL}\!\left(P_{Y| x,z} \Vert Q_{Y| x,z}\right)
+
\mathrm{KL}\!\left(P_{Z| x} \Vert Q_{Z| x}\right).
\end{equation}
Substituting this decomposition yields
\begin{equation}
\mathbb{E}_P[\tau_\delta]
\ge
\frac{\mathrm{kl}(\delta,1-\delta)}
{\sum_x w_x
[
\sum_z P_{z| x}
\mathrm{KL}(P_{Y| x,z} \Vert Q_{Y| x,z})
+
\mathrm{KL}(P_{Z| x} \Vert Q_{Z| x})
]}.
\end{equation}

Since the above bound holds for all $Q \in \mathrm{Alt}(P)$, we may take the
infimum over $Q$, and since the strategy is arbitrary, we may take the supremum
over $w \in \Sigma_K$. Thus,
\begin{equation}
\mathbb{E}_P[\tau_\delta]
\ge
\frac{\mathrm{kl}(\delta,1-\delta)}
{\sup_{w \in \Sigma_K}
\inf_{Q \in \mathrm{Alt}(P)}
\sum_x w_x \mathrm{KL}(P_x \Vert Q_x)}.
\end{equation}
Finally, using the inequality \cite{DBLP:journals/jmlr/KaufmannCG16}
of $\mathrm{kl}(\delta,1-\delta) \ge \log(1/(2.4\delta)) = \log(1/\delta) + \Theta(1)$ completes the proof.
\end{proof}

\subsection{Proof of Proposition~\ref{prop:three_arm_reduction}}
\label{app:three_arm_red_proof}
\Propositionone*
\begin{proof}
The constraint depends only on
$Q_{x_0}, Q_{x^\star(P)}, Q_{x'}$.
For any $a\notin\{x_0,x^\star(P),x'\}$,
the term $\mathrm{KL}(P_a\Vert Q_a)$
is uniquely minimized at $Q_a=P_a$
by strict convexity of KL divergence.
Any deviation from $P_a$ strictly increases the objective
while leaving the constraint unchanged,
and hence cannot be optimal.
\end{proof}

\subsection{Proof of Proposition~\ref{prop:biconvex}}
\label{app:biconvex_proof}
\Propositiontwo*
\begin{proof}
For fixed $q_0$, the objective is a sum of KL divergences
in $r^\star$ and $r'$,
which are convex functions.
The constraint \eqref{eq:bilinear_constraint}
is affine in $(r^\star,r')$ when $q_0$ is fixed.
Similarly, for fixed $(r^\star,r')$,
the objective is convex in $q_0$,
and the constraint is affine in $q_0$.
Joint convexity fails due to bilinearity of the constraint.
\end{proof}

\subsection{Proofs for Section~\ref{sec:analysis}}
\label{app:proofs_analysis}

Throughout, fix a true model $P\in\mathcal P$.
Recall that for each arm $x\in\mathcal X$, pulling $x$ yields i.i.d.\ samples
$(Z_t,Y_t)\sim P_x$, and that the algorithm is adaptive but non-anticipating.
We use $\mathbb P_P$ and $\mathbb E_P$ for probability and expectation under $P$.

We also recall the empirical joint model $\hat P(t)=(\hat P_x(t))_{x\in\mathcal X}$,
where $\hat P_x(t)$ is the empirical distribution of $(Z,Y)$ under arm $x$ up to time $t$.
For $Q\in\mathcal P$, define the negative log-likelihood ratio
\begin{equation}
\mathcal L_t(Q)
=
\sum_{x\in\mathcal X} N_x(t)\,\mathrm{KL}(\hat P_x(t)\Vert Q_x),
\end{equation}
which is equivalent to the decomposed form used in \eqref{eq:glrt_stat}.
The GLRT statistic is
\begin{equation}
Z_t(x,y)
=
\inf_{Q\in\mathcal P:\ \theta_Q(x)\ge \theta_Q(y)} \mathcal L_t(Q).
\end{equation}

\subsubsection{Proof of Theorem~\ref{thm:delta_correct} ($\delta$-correctness)}
\label{app:delta_correctness_proof}
\Theoremthree*
\begin{proof}
For each $t\ge 1$, define the likelihood-based confidence set
\begin{equation}
\mathcal U_t
:=
\left\{
Q\in\mathcal P:\ \mathcal L_t(Q)\le \beta(t,\delta)
\right\}.
\end{equation}

\paragraph{Step 1: time-uniform containment of the true model.}
By the uniform concentration inequality stated and proven in
Appendix~\ref{app:stopping}, the threshold $\beta(t,\delta)$ can be chosen so that
\begin{equation}
\label{eq:true_in_conf_all_t}
\mathbb P_P\!\left(P\in\mathcal U_t\ \text{for all } t\ge 1\right)\ge 1-\delta.
\end{equation}
Let $\mathcal E := \{P\in\mathcal U_t\ \forall t\ge 1\}$ denote this event.

\paragraph{Step 2: stopping implies no alternative remains in $\mathcal U_t$.}
Fix $t\ge 1$ and an arm $x\neq \hat x(t)$.
By definition,
\begin{equation}
Z_t(x,\hat x(t))
=
\inf_{Q:\ \theta_Q(x)\ge \theta_Q(\hat x(t))} \mathcal L_t(Q).
\end{equation}
Hence the condition $Z_t(x,\hat x(t))\ge \beta(t,\delta)$ implies that
for every $Q$ such that $\theta_Q(x)\ge\theta_Q(\hat x(t))$ we have
$\mathcal L_t(Q)\ge \beta(t,\delta)$, i.e.,
no such $Q$ lies in $\mathcal U_t$.
Equivalently,
\begin{equation}
\label{eq:conf_set_ranks_hatx}
\forall Q\in\mathcal U_t:\quad \theta_Q(\hat x(t)) \ge \theta_Q(x).
\end{equation}

At the stopping time $\tau_\delta$, the algorithm stops only if
\begin{equation}
\min_{x\neq \hat x(\tau_\delta)} Z_{\tau_\delta}(x,\hat x(\tau_\delta))
\ge \beta(\tau_\delta,\delta),
\end{equation}
so \eqref{eq:conf_set_ranks_hatx} holds for all competitors $x\neq \hat x(\tau_\delta)$.
Thus, for all $Q\in\mathcal U_{\tau_\delta}$,
\begin{equation}
\theta_Q(\hat x(\tau_\delta)) \ge \theta_Q(x)\quad \forall x\neq \hat x(\tau_\delta),
\end{equation}
which means $\hat x(\tau_\delta) \in \arg\max_{x\in\mathcal X} \theta_Q(x)$ for every $Q\in\mathcal U_{\tau_\delta}$ if $\tau_\delta < \infty$.

\paragraph{Step 3: correctness on the event $\mathcal E$.}
On the event $\mathcal E$, we have $P\in \mathcal U_{\tau_\delta}$ (since $\tau_\delta\ge 1$),
and by Assumption~\ref{ass:unique_opt}, the conclusion above gives
$\hat x(\tau_\delta)=x^\star(P)$ on $\mathcal E$.
Therefore,
\begin{equation}
\mathbb P_P(\hat x(\tau_\delta)\neq x^\star(P), \tau_\delta < \infty)
\le \mathbb P_P(\mathcal E^c)
\le \delta,
\end{equation}
using \eqref{eq:true_in_conf_all_t}. This proves $\delta$-correctness.
\end{proof}

\subsubsection{Proof of Lemma~\ref{lem:all_cells_diverge}}
\label{app:all_cells_diverge_full_proof}
\Lemmaone*
\begin{proof}
Fix an arm $x\in\mathcal X$. We first show that $N_x(t)\to\infty$ almost surely.

\paragraph{Step 1: $x$ enters the relevant set infinitely often.}
Let $C_x(t)=\sum_{s=1}^t \mathbf 1\{x'(s)=x\}$ be the competitor counter.
By construction of the competitor-coverage rule, for every $t$,
\begin{equation}
\min_{u\in\mathcal X} C_u(t)\ \ge\ h(t)-1.
\end{equation}
In particular, $C_x(t)\ge h(t)-1$ for all sufficiently large $t$, hence $C_x(t)\to\infty$.
Moreover, whenever $x'(s)=x$ we have $x\in \mathcal A_{\mathrm{rel}}(s)$.
Therefore, $x$ belongs to $\mathcal A_{\mathrm{rel}}(t)$ infinitely often.

\paragraph{Step 2: contradiction argument for $N_x(t)$.}
Suppose by contradiction that $N_x(t)$ is bounded on some event $E$ with
$\mathbb P_P(E)>0$. Then there exists $B<\infty$ such that on $E$,
\begin{equation}
    N_x(t)\le B\ \ \forall t,
    \qquad\text{and hence}\qquad
    m_x(t):=\min_{z\in\mathcal Z}N_{x,z}(t)\le N_x(t)\le B\ \ \forall t.
\end{equation}

Since $g(t)\to\infty$ and is nondecreasing, there exists $T$ such that
$g(t)>B$ for all $t\ge T$. Thus, on $E$, for every $t\ge T$ with
$x\in\mathcal A_{\mathrm{rel}}(t)$ we have $m_x(t)<g(t)$, i.e., $x\in\mathcal D(t)$.

Let
\begin{equation}
    \mathcal T := \{t\ge T:\ x\in\mathcal A_{\mathrm{rel}}(t)\}.
\end{equation}

By Step~1, $\mathcal T$ is infinite. For each $t\in\mathcal T$, the set
$\mathcal D(t)$ is nonempty and the forcing rule selects
\begin{equation}
    X_{t+1}\in \arg\min_{u\in\mathcal D(t)} \big(m_u(t),\,N_u(t)\big).
\end{equation}

Assume that on $E$ the arm $x$ is selected only finitely many times.
Then there exists $T'\ge T$ such that for all $t\in\mathcal T$ with $t\ge T'$,
the forced choice satisfies $X_{t+1}\neq x$. Since only finitely many arms exist,
there must be an arm $u^\star\neq x$ that is selected infinitely often among these
forcing rounds.

By Assumption~\ref{ass:uniform_interior}, $P_{u^\star}(z)>0$ for all $z$, and hence,
conditional on selecting arm $u^\star$ infinitely often, the strong law yields
$N_{u^\star,z}(t)\to\infty$ for every $z$, so in particular $m_{u^\star}(t)\to\infty$
almost surely. Therefore, on $E$ we have $m_{u^\star}(t)>B$ for all sufficiently large $t$.

But for all $t$ we also have $m_x(t)\le B$. Hence, for all sufficiently large
$t\in\mathcal T$, the arm $x$ satisfies $m_x(t)<m_{u^\star}(t)$ and belongs to
$\mathcal D(t)$, so $x$ must be chosen by the minimization of $m_u(t)$ in the forcing rule,
a contradiction. We conclude that $N_x(t)\to\infty$ almost surely.

\paragraph{Step 3: divergence of all cell counts.}
Fix $z\in\mathcal Z$. Consider the subsequence of rounds at which $X_s=x$.
Conditional on these pull times, the mediator observations are i.i.d.\ with law
$P_x(\cdot)$. By the strong law of large numbers applied to
$\mathbf 1\{Z=z\}$ along this subsequence,
\begin{equation}
\frac{N_{x,z}(t)}{N_x(t)} \to P_x(z)
\quad\text{almost surely.}
\end{equation}
By Assumption~\ref{ass:uniform_interior}, $P_x(z)>0$, and since $N_x(t)\to\infty$,
it follows that $N_{x,z}(t)\to\infty$ almost surely.

Since $x$ and $z$ were arbitrary, the claim holds for all $(x,z)$.
\end{proof}

\subsubsection{Proof of Lemma~\ref{lem:plugin_consistency}}
\label{app:plugin_consistency_full_proof}

\Lemmatwo*
\begin{proof}
Define for any model $R\in\mathcal P$ and $w\in\Sigma_K$,
\begin{equation}
\Phi_R(w)
:=
\inf_{Q\in\mathrm{Alt}(R)}
\sum_{x\in\mathcal X} w_x\,\mathrm{KL}(R_x\Vert Q_x),
\qquad
\mathrm{Alt}(R):=\{Q\in\mathcal P:\ x^\star(Q)\neq x^\star(R)\}.
\end{equation}
By definition, $\hat w(t)\in\argmax_{w\in\Sigma_K}\Phi_{\hat P(t)}(w)$.

\paragraph{Step 1: Stabilization of the plug-in best arm.}
Since $x^\star(P)$ is unique, there exists a gap
\begin{equation}
\Delta
:=
\theta_P(x^\star(P))-\max_{x\neq x^\star(P)}\theta_P(x)
\;>\;0,
\end{equation}
where $\theta_P(x)$ denotes the objective value of arm $x$ under model $P$.
Because $\theta_R(x)$ is a continuous function of $R$ on $\mathcal P$
(it is a finite sum/product of probabilities on a finite alphabet),
$\hat P(t)\to P$ almost surely implies $\theta_{\hat P(t)}(x)\to\theta_P(x)$
for every $x$.
Hence, on an event of probability one, there exists a (random) time $T_0$
such that for all $t\ge T_0$,
\begin{equation}
\theta_{\hat P(t)}(x^\star(P))
>
\max_{x\neq x^\star(P)} \theta_{\hat P(t)}(x),
\end{equation}
which implies $\hat x(t)=x^\star(\hat P(t))=x^\star(P)$ for all $t\ge T_0$.

\paragraph{Step 2: For large $t$, the alternative set becomes fixed.}
On the same event, for all $t\ge T_0$ we have
\begin{equation}
\mathrm{Alt}(\hat P(t))
=
\{Q\in\mathcal P:\ x^\star(Q)\neq \hat x(t)\}
=
\{Q\in\mathcal P:\ x^\star(Q)\neq x^\star(P)\}
=: \mathrm{Alt}^\star,
\end{equation}
which no longer depends on $t$.

We now work on $t\ge T_0$ and treat $\mathrm{Alt}^\star$ as fixed.

\paragraph{Step 3: Compactness of $\mathrm{Alt}^\star$.}
Since $\mathcal P$ is a product of simplices over a finite alphabet, it is
compact. Moreover, for each $x\neq x^\star(P)$, the set
\begin{equation}
\mathcal C_x:=\{Q\in\mathcal P:\ \theta_Q(x)\ge \theta_Q(x^\star(P))\}
\end{equation}
is closed because $\theta_Q(\cdot)$ is continuous in $Q$. Therefore
\begin{equation}
\mathrm{Alt}^\star
=
\bigcup_{x\neq x^\star(P)} \mathcal C_x
\end{equation}
is a finite union of closed sets, hence closed, and thus compact as a closed
subset of compact $\mathcal P$.

\paragraph{Step 4: Continuity of the value function on a fixed alternative set.}
Fix $w\in\Sigma_K$ and define
\begin{equation}
G(R,w,Q):=\sum_{x\in\mathcal X} w_x\,\mathrm{KL}(R_x\Vert Q_x),
\qquad Q\in\mathrm{Alt}^\star.
\end{equation}
Under Assumption~\ref{ass:uniform_interior}, all probabilities involved are
bounded away from $0$ and $1$ uniformly over $R,Q\in\mathcal P$.
Hence $\mathrm{KL}(R_x\Vert Q_x)$ is finite and continuous in $(R,Q)$ for each $x$,
and therefore $G$ is continuous in $(R,w,Q)$ on
$\mathcal P\times\Sigma_K\times \mathrm{Alt}^\star$.
Since $\mathrm{Alt}^\star$ is compact, the value function
\begin{equation}
\widetilde\Phi_R(w):=\inf_{Q\in\mathrm{Alt}^\star} G(R,w,Q)
\end{equation}
is continuous in $(R,w)$ by Berge's maximum theorem.

\paragraph{Step 5: Continuity of the argmax and convergence of $\hat w(t)$.}
Define $f(R,w):=\widetilde\Phi_R(w)$ on $\mathcal P\times\Sigma_K$.
We have shown $f$ is continuous and $\Sigma_K$ is compact.
Let $w^*(P)$ denote the unique maximizer of $w\mapsto f(P,w)$.

Consider any sequence $t_n\to\infty$ and write $R_n:=\hat P(t_n)$ and
$w_n:=\hat w(t_n)$. By compactness of $\Sigma_K$, $(w_n)$ has a convergent
subsequence (not relabeled) with limit $\bar w\in\Sigma_K$.
On the almost sure event where $\hat P(t)\to P$ and $t_n\ge T_0$ eventually,
we have $R_n\to P$ and $\mathrm{Alt}(\hat P(t_n))=\mathrm{Alt}^\star$ for all
large $n$. Since $w_n$ maximizes $f(R_n,\cdot)$,
\begin{equation}
f(R_n,w_n)\ge f(R_n,w)\quad\text{for all }w\in\Sigma_K.
\end{equation}
Taking limits along the convergent subsequence and using continuity of $f$ gives
\begin{equation}
f(P,\bar w)\ge f(P,w)\quad\text{for all }w\in\Sigma_K,
\end{equation}
so $\bar w\in\argmax_{w\in\Sigma_K} f(P,w)=\{w^*(P)\}$ by uniqueness.
Hence $\bar w=w^*(P)$.

Because every convergent subsequence of $(\hat w(t))$ has the same limit
$w^*(P)$, it follows that $\hat w(t)\to w^*(P)$ almost surely.
\end{proof}

\subsubsection{Proof of Lemma~\ref{lem:forced_sublinear}}
\label{app:forced_sublinear_proof}
\Lemmathree*
\begin{proof}
Write $F(t)=F_{\mathrm{cov}}(t)+F_{\mathrm{cell}}(t)$, where
$F_{\mathrm{cov}}(t)$ (resp.\ $F_{\mathrm{cell}}(t)$) counts the rounds up to $t$
on which competitor coverage (resp.\ cell-level forcing) is active.

\paragraph{Step 1: competitor coverage is deterministically $O(h(t))$.}
Recall $C_x(t)=\sum_{s=1}^t\mathbf 1\{x'(s)=x\}$ and the coverage rule that,
whenever $\min_x C_x(t-1)<h(t)$, selects $x'(t)$ among arms with
$C_x(t-1)<h(t)$ and increments only that counter.
Thus, for each arm $x$, the rule can select $x$ under the condition
$C_x(t-1)<h(t)$ at most $h(t)$ times up to time $t$.
Summing over $K:=|\mathcal X|$ arms yields the deterministic bound
\begin{equation}
F_{\mathrm{cov}}(t)\le \sum_{x\in\mathcal X} C_x(t)\,\mathbf 1\{C_x(t)<h(t)+1\}
\le K\,h(t)+K,
\end{equation}
hence $F_{\mathrm{cov}}(t)=O(h(t))$ and therefore $F_{\mathrm{cov}}(t)=o(t)$
since $a\in(0,1)$.

\paragraph{Step 2: cell-level forcing is $O(g(t))$ almost surely.}
Fix an arm $x\in\mathcal X$ and define the cell counts
$N_{x,z}(t)=\sum_{s\le t}\mathbf 1\{X_s=x,Z_s=z\}$ and
$N_x(t)=\sum_{z}N_{x,z}(t)$.
Let
\begin{equation}
p_{\min}:=\min_{x\in\mathcal X,\;z\in\mathcal Z} P_x(z) \in (0,1)
\end{equation}
which exists and is strictly positive by Assumption~\ref{ass:uniform_interior}.
Consider the event
\begin{equation}
\mathcal E_x:=\{\forall z\in\mathcal Z:\;
\lim_{n\to\infty}\frac{N_{x,z}(t_n)}{n}=P_x(z)\ \text{along the pull times of $x$}\}.
\end{equation}
By the strong law of large numbers applied to the i.i.d.\ mediator draws under
repeated pulls of $x$, we have $\mathbb P(\mathcal E_x)=1$.
On $\mathcal E_x$, there exists an (a.s.\ finite) random time $T_x$ such that for
all $t\ge T_x$ and all $z\in\mathcal Z$,
\begin{equation}
N_{x,z}(t)\ \ge\ \tfrac{p_{\min}}{2}\,N_x(t).
\tag{$\star$}
\label{eq:cell_lb}
\end{equation}
Consequently, for $t\ge T_x$,
\begin{equation}
m_x(t):=\min_{z\in\mathcal Z}N_{x,z}(t)
\ \ge\ \tfrac{p_{\min}}{2}\,N_x(t).
\end{equation}
Therefore, if $x$ is cell-deficient at time $t\ge T_x$, i.e.\ $m_x(t)<g(t)$, then
necessarily
\begin{equation}
N_x(t)\ <\ \frac{2}{p_{\min}}\,g(t).
\tag{$\dagger$}
\label{eq:Nx_needed}
\end{equation}
Now let $F_{\mathrm{cell},x}(t)$ be the number of times up to $t$ that the
algorithm selects arm $x$ \emph{because of cell-level forcing}.
Trivially, $F_{\mathrm{cell},x}(t)\le N_x(t)$.
Moreover, once \eqref{eq:Nx_needed} fails (i.e.\ once
$N_x(t)\ge \tfrac{2}{p_{\min}}g(t)$) and $t\ge T_x$, the arm $x$ cannot be
cell-deficient anymore, hence it cannot be selected due to cell forcing.
Thus, on $\mathcal E_x$ and for all large enough $t$,
\begin{equation}
F_{\mathrm{cell},x}(t)
\ \le\ \frac{2}{p_{\min}}\,g(t)+T_x.
\end{equation}
Summing over all $x\in\mathcal X$ and using finiteness of $\mathcal X$ gives,
on $\bigcap_x\mathcal E_x$ (an event of probability one),
\begin{equation}
F_{\mathrm{cell}}(t)
=\sum_{x\in\mathcal X}F_{\mathrm{cell},x}(t)
\ \le\ \frac{2K}{p_{\min}}\,g(t)+\sum_{x\in\mathcal X}T_x
\ =\ O(g(t))\qquad\text{a.s.}
\end{equation}
Since $g(t)=o(h(t))$ and $h(t)=t^a$ with $a\in(0,1)$, we have $g(t)=o(t)$ and hence
$F_{\mathrm{cell}}(t)=o(t)$ almost surely.

\paragraph{Step 3: conclusion.}
Combining the bounds yields, almost surely,
\begin{equation}
F(t)=F_{\mathrm{cov}}(t)+F_{\mathrm{cell}}(t)
=O(h(t))+O(g(t))=o(t).
\end{equation}
\end{proof}

\subsubsection{Proof of Lemma~\ref{lem:tracking_final}}
\label{app:tracking_sublinear_proof}
\Lemmafour*
\begin{proof}
We work on the almost sure event on which both
\begin{equation}
\hat w(t)\to w^*(P)
\qquad\text{and}\qquad
F(t)=o(t)
\end{equation}
hold, where $F(t)$ denotes the number of forced rounds up to time $t$.

Let $\mathcal F\subseteq\mathbb N$ be the (random) set of forced rounds and
define $n(t):=t-F(t)$, the number of non-forced rounds up to time $t$.
Then $n(t)\to\infty$ and $n(t)/t\to 1$.

Let $\tau_1<\tau_2<\dots$ denote the (decision) times at which
the arm selection rule uses D-tracking rather than forcing.
Since at time $t$ the algorithm chooses $X_{t+1}$,
the non-forced pull counts are defined by
\begin{equation}
\widetilde N_x(k):=\sum_{j=1}^k \mathbf 1\{X_{\tau_j+1}=x\}.
\end{equation}
We have the decomposition
\begin{equation}
N_x(t)
=
\widetilde N_x(n(t))
+
N_x^{\mathrm{F}}(t),
\qquad
0\le N_x^{\mathrm{F}}(t)\le F(t).
\label{eq:tracking_decomposition}
\end{equation}

\paragraph{Step 1: induced dynamics on non-forced rounds.}
On each non-forced decision time $\tau_k$, the algorithm selects
\begin{equation}
X_{\tau_k+1}
\in
\arg\max_{x\in\mathcal X}
(\tau_k \hat w_x(\tau_k)-N_x(\tau_k)).
\end{equation}
Using \eqref{eq:tracking_decomposition},
\begin{equation}
\tau_k \hat w_x(\tau_k)-N_x(\tau_k)
=
k\,\hat w_x(\tau_k)-\widetilde N_x(k)
+
R_{k,x},
\end{equation}
where
\begin{equation}
R_{k,x}
=
(\tau_k-k)\hat w_x(\tau_k)-N_x^{\mathrm{F}}(\tau_k).
\end{equation}
Since $\tau_k-k=F(\tau_k)=o(\tau_k)=o(k)$ and
$0\le N_x^{\mathrm{F}}(\tau_k)\le F(\tau_k)=o(k)$,
we have uniformly in $x$,
\begin{equation}
\sup_{x\in\mathcal X}|R_{k,x}|=o(k).
\end{equation}
Thus the update rule on non-forced rounds is an $o(k)$ perturbation
of the ideal D-tracking rule
\begin{equation}
\arg\max_x(k\,\hat w_x(\tau_k)-\widetilde N_x(k)).
\end{equation}

\paragraph{Step 2: convergence along non-forced rounds.}
Since $\hat w(\tau_k)\to w^*(P)$, the deterministic stability argument
for D-tracking (cf.\ \citep[Appendix~B.2--B.3]{garivier2016optimalbestarmidentification})
yields
\begin{equation}
\frac{\widetilde N_x(k)}{k}\to w_x^*(P)
\qquad\text{for all }x\in\mathcal X.
\end{equation}

\paragraph{Step 3: lifting back to real time.}
Using \eqref{eq:tracking_decomposition},
\begin{equation}
\frac{N_x(t)}{t}
=
\frac{n(t)}{t}
\cdot
\frac{\widetilde N_x(n(t))}{n(t)}
+
\frac{N_x^{\mathrm{F}}(t)}{t}.
\end{equation}
Since $n(t)/t\to 1$, $\widetilde N_x(n)/n\to w_x^*(P)$,
and $N_x^{\mathrm{F}}(t)/t\le F(t)/t\to 0$,
we conclude
\begin{equation}
\frac{N_x(t)}{t}\to w_x^*(P)
\qquad\text{for all }x\in\mathcal X.
\end{equation}
\end{proof}

\subsubsection{Proof of Lemma~\ref{lem:glrt_growth}}
\label{app:glrt_growth_proof}
\Lemmafive*
\begin{proof}
Work on an event of probability one on which the following hold simultaneously:
(i) $\hat P(t)\to P$,
(ii) $N_a(t)/t\to w_a^*(P)$ for all $a\in\mathcal X$, and
(iii) $\hat x(t)=x^\star(P)$ for all sufficiently large $t$.
All three were established in Section~\ref{sec:analysis}.

Fix a competitor $x\neq x^\star(P)$.
For all sufficiently large $t$, we have $\hat x(t)=x^\star(P)$, and therefore
\begin{equation}
Z_t(x,\hat x(t))
=
\inf_{\substack{Q\in\mathcal P:\\ \theta_Q(x)\ge \theta_Q(x^\star(P))}}
\sum_{a\in\mathcal X}
N_a(t)\,\mathrm{KL}(\hat P_a(t)\Vert Q_a).
\end{equation}

Let $(t_n)$ be a subsequence with $t_n\to\infty$.
For each $n$, choose $Q_n$ in the constraint set such that
\begin{equation}
Z_{t_n}(x,\hat x(t_n))
\ge
\sum_{a\in\mathcal X}
N_a(t_n)\,\mathrm{KL}(\hat P_a(t_n)\Vert (Q_n)_a)
-\frac{1}{n}.
\end{equation}
Since $\mathcal P$ is compact, we may extract a subsequence (not relabeled)
such that $Q_n\to Q_\infty\in\mathcal P$.

Because $\hat P(t_n)\to P$ and $\theta_Q(\cdot)$ is continuous in $Q$,
the constraint $\theta_{Q_n}(x)\ge \theta_{Q_n}(x^\star(P))$
passes to the limit, hence $Q_\infty$ satisfies
$\theta_{Q_\infty}(x)\ge \theta_{Q_\infty}(x^\star(P))$.

Under Assumption~\ref{ass:uniform_interior},
$\mathrm{KL}(\cdot\Vert\cdot)$ is continuous on $\mathcal P\times\mathcal P$.
Thus for each arm $a$,
\begin{equation}
\mathrm{KL}(\hat P_a(t_n)\Vert (Q_n)_a)
\to
\mathrm{KL}(P_a\Vert (Q_\infty)_a).
\end{equation}
Combining with $N_a(t_n)/t_n\to w_a^*(P)$ yields
\begin{equation}
\lim_{n\to\infty}
\frac{1}{t_n}
\sum_{a\in\mathcal X}
N_a(t_n)\,\mathrm{KL}(\hat P_a(t_n)\Vert (Q_n)_a)
=
\sum_{a\in\mathcal X}
w_a^*(P)\,\mathrm{KL}(P_a\Vert (Q_\infty)_a).
\end{equation}
Therefore,
\begin{equation}
\liminf_{n\to\infty}
\frac{1}{t_n}
Z_{t_n}(x,\hat x(t_n))
\ge
\sum_{a\in\mathcal X}
w_a^*(P)\,\mathrm{KL}(P_a\Vert (Q_\infty)_a)
\ge
\inf_{\substack{Q\in\mathcal P:\\
\theta_Q(x)\ge \theta_Q(x^\star(P))}}
\sum_{a\in\mathcal X}
w_a^*(P)\,\mathrm{KL}(P_a\Vert Q_a).
\end{equation}

Since the sequence $(t_n)$ was arbitrary and limit infimum corresponds to one of such subsequences, we conclude
\begin{equation}
\liminf_{t\to\infty}
\frac{1}{t}
Z_t(x,\hat x(t))
\ge
\inf_{\substack{Q\in\mathcal P:\\ \theta_Q(x)\ge \theta_Q(x^\star(P))}}
\sum_{a\in\mathcal X}
w_a^*(P)\,\mathrm{KL}(P_a\Vert Q_a)
\quad\text{w.p. 1}.
\end{equation}

Taking the minimum over $x\neq x^\star(P)$ and recalling the definition of
$T^*(P)$ from Theorem~\ref{thm:nde_lower_bound} yields
\begin{equation}
\liminf_{t\to\infty}
\frac{1}{t}
\min_{x\neq x^\star(P)} Z_t(x,\hat x(t))
\ge
\frac{1}{T^*(P)}.
\end{equation}
\end{proof}

\subsubsection{Proof of Theorem \ref{thm:asympt_opt_as} (Almost-sure asymptotic optimality)}
\label{app:asympt_opt_as_proof}
\Theoremfive*
\begin{proof}
Let
\begin{equation}
Z(t):=\min_{x\neq \hat x(t)} Z_t(x,\hat x(t))
\end{equation}
be the GLRT statistic used in the stopping rule
\begin{equation}
\tau_\delta=\inf\{t\ge 1:\; Z(t)\ge \beta(t,\delta)\}.
\end{equation}

\paragraph{Step 1: Linear growth of the GLRT.}
By Lemma~\ref{lem:glrt_growth},
\begin{equation}
\liminf_{t\to\infty}\frac{Z(t)}{t}\ge \frac{1}{T^\star(P)}
\qquad\text{w.p. 1}
\end{equation}
Hence, for every $\varepsilon>0$, there exists a (random) finite time $t_\varepsilon$ (w.p.\ 1)
such that for all $t\ge t_\varepsilon$,
\begin{equation}
Z(t)\ge \frac{t}{(1+\varepsilon)T^\star(P)}
\qquad\text{w.p. 1}
\label{eq:Z_linear_as}
\end{equation}

\paragraph{Step 2: Dominating $\beta(t,\delta)$ by a polynomial.}
From \eqref{eq:beta}
\begin{equation}
\beta(t,\delta)
=
\log\Big(\tfrac{\pi^2 t^{2+2K M_Z}}{\delta}\Big)
+
6 M_Z K \log(t+1).
\end{equation}
For all $t\ge 1$, using $\log(t+1)\le \log(2t)=\log 2 + \log t$, we obtain
\begin{align}
\beta(t,\delta)
&\le
\log(1/\delta)
+
\log(\pi^2)
+
(2+2KM_Z)\log t
+
6KM_Z(\log 2+\log t)
\nonumber\\
&=
\log\!\Big(\frac{C\, t^{\alpha}}{\delta}\Big),
\label{eq:beta_poly_dom}
\end{align}
where
\begin{equation}
\alpha:=2+8KM_Z,
\qquad
C:=\pi^2\,2^{6KM_Z}.
\end{equation}

\paragraph{Step 3: Solving the implicit inequality.}
Combining \eqref{eq:Z_linear_as} and \eqref{eq:beta_poly_dom}, for all $t\ge t_\varepsilon$,
the condition $Z(t)\ge \beta(t,\delta)$ is implied by
\begin{equation}
\frac{t}{(1+\varepsilon)T^\star(P)}
\ge
\log\!\Big(\frac{C t^\alpha}{\delta}\Big).
\end{equation}
Let $a:=(1+\varepsilon)T^\star(P)$. Then it suffices that
\begin{equation}
t \ge a \log\!\Big(\frac{C t^\alpha}{\delta}\Big).
\end{equation}
Applying Lemma~18 of \citet{garivier2016optimalbestarmidentification} yields that,
for all sufficiently small $\delta$,
\begin{equation}
\tau_\delta
\le
a\Big(
\log(C/\delta)
+
\alpha \log\log(C/\delta)
+
O(1)
\Big)
\qquad\text{w.p. 1}
\label{eq:tau_upper_loglog}
\end{equation}

\paragraph{Step 4: Limsup bound.}
Dividing \eqref{eq:tau_upper_loglog} by $\mathrm{kl}(\delta,1-\delta)$ and letting $\delta\downarrow 0$
gives, w.p. 1,
\begin{equation}
    \limsup_{\delta\rightarrow 0}\frac{\tau_\delta}{\mathrm{kl}(\delta,1-\delta)}
    \le
    (1+\varepsilon)T^\star(P),
\end{equation}

since $\mathrm{kl}(\delta,1-\delta)=\log(1/\delta)+O(1)$ and $\log\log(1/\delta)=o(\mathrm{kl}(\delta,1-\delta))$.
As $\varepsilon>0$ is arbitrary, letting $\varepsilon\downarrow 0$ yields
\begin{equation}
  \limsup_{\delta\rightarrow 0}\frac{\tau_\delta}{\mathrm{kl}(\delta,1-\delta)}
    \le
    T^\star(P)
    \qquad\text{w.p.\ 1}. 
\end{equation}

\end{proof}

\end{document}